\pdfoutput=1
\documentclass{article}

% if you need to pass options to natbib, use, e.g.:
%     \PassOptionsToPackage{numbers, compress}{natbib}
% before loading neurips_2021

% ready for submission
\usepackage[preprint]{neurips_2021}

% to compile a preprint version, e.g., for submission to arXiv, add add the
% [preprint] option:
%     \usepackage[preprint]{neurips_2021}

% to compile a camera-ready version, add the [final] option, e.g.:
%     \usepackage[final]{neurips_2021}

% to avoid loading the natbib package, add option nonatbib:
%     \usepackage[nonatbib]{neurips_2021}

\usepackage[utf8]{inputenc} % allow utf-8 input
\usepackage[T1]{fontenc}    % use 8-bit T1 fonts
\usepackage{hyperref}       % hyperlinks
\usepackage{url}            % simple URL typesetting
\usepackage{booktabs}       % professional-quality tables
\usepackage{amsfonts}       % blackboard math symbols
\usepackage{nicefrac}       % compact symbols for 1/2, etc.
\usepackage{microtype}      % microtypography
\usepackage{xcolor}         % colors
\usepackage{subcaption}
\usepackage{natbib}
\bibliographystyle{abbrvnat}

\usepackage{custom}
\RequirePackage{algorithm}
\RequirePackage{algorithmic}

\title{Manifold Density Estimation via Generalized Dequantization}

% The \author macro works with any number of authors. There are two commands
% used to separate the names and addresses of multiple authors: \And and \AND.
%
% Using \And between authors leaves it to LaTeX to determine where to break the
% lines. Using \AND forces a line break at that point. So, if LaTeX puts 3 of 4
% authors names on the first line, and the last on the second line, try using
% \AND instead of \And before the third author name.

\author{%
  James A. Brofos \\
  Yale University \\
  % examples of more authors
  \And
  Marcus A. Brubaker \\ 
  York University \\
  \And 
  Roy R. Lederman \\
  Yale University
}

\begin{document}

\maketitle

\begin{abstract}
Density estimation is an important technique for characterizing distributions given observations. Much existing research on density estimation has focused on cases wherein the data lies in a Euclidean space. However, some kinds of data are not well-modeled by supposing that their underlying geometry is Euclidean. Instead, it can be useful to model such data as lying on a {\it manifold} with some known structure. For instance, some kinds of data may be known to lie on the surface of a sphere. We study the problem of estimating densities on manifolds. We propose a method, inspired by the literature on ``dequantization,'' which we interpret through the lens of a coordinate transformation of an ambient Euclidean space and a smooth manifold of interest. Using methods from normalizing flows, we apply this method to the dequantization of smooth manifold structures in order to model densities on the sphere, tori, and the orthogonal group.
\end{abstract}

\section{Introduction}

% \begin{figure}[t!]
\begin{wrapfigure}{r}{0.5\textwidth}
    \vspace{-1.5cm}
    \centering
    \includegraphics[width=0.4\textwidth]{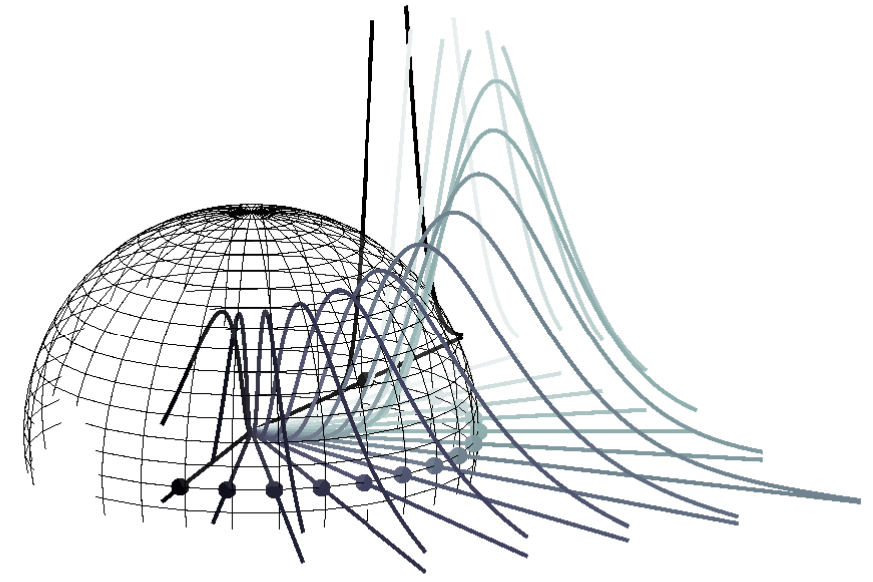}
    \caption{We model densities on a manifold as a projection, or ``quantization,'' onto the manifold from an ambient Euclidean space.  To enable density computations we use a ``dequantization density'' which can depend position on the manifold. In this figure the manifold in question is $\mathbb{S}^2$, embedded in $\R^3$ and the dequantization density, illustrated here for a set of locations along the equator of $\mathbb{S}^2$, is over $r\in \R_+$, the distance from the origin in the direction $y\in\mathbb{S}^2$. Density on the manifold can be estimated via importance sampling by marginalizing over $\R^+$ for a given $y\in\mathbb{S}^2$ using the dequantization distribution as an importance distribution.}
    \label{fig:dequantization-sphere-density}
\end{wrapfigure}

Certain kinds of data are not well-modeled under the assumption of an underlying Euclidean geometry. Examples include data with a fundamental directional structure, data that represents transformations of Euclidean space (such as rotations and reflections), data that has periodicity constraints or data that represents hierarchical structures. In such cases, it is important to explicitly model the data as lying on a {\it manifold} with a suitable structure; for instance a sphere would be appropriate for directional data, the orthogonal group for rotations and reflections, and the torus captures structural properties of periodicity. 

The first contribution of this work is to express density estimation on manifolds as a form of dequantization.  Given a probability density in an ambient Euclidean space, one can obtain the density on the manifold by performing a {\it manifold change-of-variables} in which the manifold structure appears and then projecting out any auxiliary structures. This marginalization can be viewed as analogous to ``quantization'' where, for instance, continuous values are discarded and only rounded integer values remain.  In this view the auxiliary structure defines how the manifold could be ``dequantized'' into the ambient Euclidean space. By marginalizing the auxiliary dimensions, the marginal distribution on the manifold is obtained. In practice, however, one has only the manifold-constrained observations from an unknown distribution on the manifold. A second contribution of this work is to formulate the density estimation as a learning problem on the ambient Euclidean space.  We show how to invoke the manifold change-of-variables and perform the marginalization along the auxiliary dimensions to obtain effective estimates of the density on the manifold. An advantage of our dequantization approach is that it {\it allows one to utilize any expressive density directly on the ambient Euclidean space} (e.g., RealNVP \citep{DBLP:conf/iclr/DinhSB17},  neural ODEs \citep{NEURIPS2018_69386f6b,grathwohl2018ffjord} or any other normalizing flow \citep{kobyzev2020normalizing}); the dequantization approach does not require a practitioner to construct densities intrinsically on the manifold. 
We emphasize that our theory can be applied to any embedded manifold, provided one can identify a suitable auxiliary manifold structure and manifold change-of-variables to Euclidean space; finding a suitable auxiliary structure may be a challenge in some cases.
We focus our attention on several important matrix manifolds which are common in practice and provide suitable structures for them.

\section{Illustrative Example: Sphere}\label{sec:illustrative-example-sphere}

\begin{figure*}[t!]
    \centering
    \scalebox{0.87}{
    \begin{tikzpicture}
    \node (A) at (0,0) {Spaces};
    \node (B) at (2,0) {$\R^m$};
    \node (change) at (3.9,0.5) {Change-of-Variables};
    \node (C) at (6,0) {$\mathcal{Y}\times\mathcal{Z}$};
    \node (D) at (10,0) {$\mathcal{Y}$};
    \path (B) -- node (cov) {$G$} (C);
    \path (C) -- node (quantization) {Quantization} (D);
    \draw[->] (B) --(cov)-- (C);
    
    \node (alpha) at (14,0) {$\mathcal{Y}\times\mathcal{Z}$};
    \path (D) -- node (dequantization) {Dequantization} (alpha);
    \draw[->] (D) --(dequantization)-- (alpha);
    
    \draw[->] (C) --(quantization)-- (D);
    \node (E) at (0,-2) {Densities};
    \node (F) at (2,-2) {$\pi_{\R^m}$};
    \node (G) at (6,-2) {$\pi_{\mathcal{Y}\times\mathcal{Z}}$};
    \node (H) at (10,-2) {$\pi_{\mathcal{Y}}$};
    \node (I) at (14,-2) {$\tilde{\pi}_{\mathcal{Z}}$};
    \node (J) [text width=2cm,align=center] at (14,-2.7) {Dequantization Density};
    \draw[->] (F) --(cov)-- (G);
    \draw[->] (G) --(quantization)-- (H);
    \node (impsamp) at (12,-2) {$\underset{z\sim\tilde{\pi}}{\mathbb{E}} \frac{\pi_{\mathcal{Y}\times\mathcal{Z}(\cdot, z)}}{\tilde{\pi}_{\mathcal{Z}}(z)}$};
    \draw[->] (I) --(impsamp)-- (H);
    \draw[->] (G) |- (J);
    \draw[->] (alpha) edge (I);
    
    \node (marginal) at (8.1,-1.5) {$\int_{\mathcal{Z}} \pi_{\mathcal{Y}\times\mathcal{Z}}(\cdot, z)~\mathrm{d}z$};
    
    \end{tikzpicture}}
    \caption{The dequantization roadmap. In the first row, we begin with $\R^m$.
    %(or a space identical to $\R^m$ up to a set of measure zero).
    This Euclidean space is transformed into the product of manifolds $\mathcal{Y}\times\mathcal{Z}$ via a change-of-variables $G:\R^m\to\mathcal{Y}\times\mathcal{Z}$. Quantization takes the product manifold $\mathcal{Y}\times\mathcal{Z}$ to its $\mathcal{Y}$-component alone. In the second row, we begin with a probability density $\pi_{\R^m}$ defined on $\R^m$. Under the change-of-variables $G$ we obtain a new probability density $\pi_{\mathcal{Y}\times\mathcal{Z}}$ which is related to $\pi_{\R^m}$ by the manifold change-of-variables \cref{eq:manifold-change-of-variables}. Quantizing $\mathcal{Y}\times\mathcal{Z}$ marginalizes out the $\mathcal{Z}$-component of $\pi_{\mathcal{Y}\times\mathcal{Z}}$. We similarly introduce a dequantization density $\tilde{\pi}_{\mathcal{Z}}$ and compute the marginal density on $\mathcal{Y}$ via importance sampling.}
    \label{fig:dequantization-roadmap}
\end{figure*}
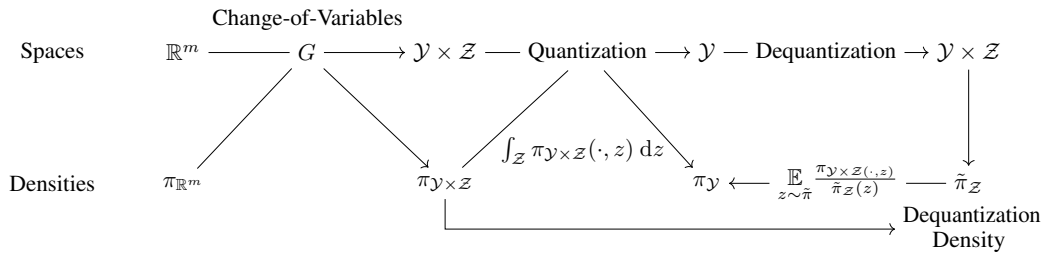

We first offer a simple example to illustrate how the method may be applied to obtain densities on the sphere. Let $\mathbb{S}^2\subset \R^3$  represent the 2-dimensional sphere viewed naturally as an embedded manifold of 3-dimensional Euclidean space. Excluding the point $(0,0,0)\in\R^3$, observe that every other point $x\in\R^3$ may be uniquely identified with a point $s\in\mathbb{S}^2$ and positive real number $r\in\R_+$ such that $x=rs$. Thus, $\mathbb{S}^2\times\R_+$ represents a {\it spherical coordinate system} for $\R^3\setminus \set{0}$. 
If $\pi_{\R^3}(x)$ is a density on $\R^3$ (equivalent to $\R^3\setminus\set{0}$ up to a set of Lebesgue measure zero), we can apply the standard change-of-variables formula in order to obtain a density on $\mathbb{S}^2\times\R_+$:
\begin{align}
    \label{eq:spherical-change} \pi_{\mathbb{S}^2\times\R_+}(s, r) = r^2 \cdot \pi_{\R^3}(rs), 
\end{align}
where $r^2$ is the associated Jacobian determinant accounting for changes in volume.
We will refer to $\pi_{\R^3}$ as the {\it ambient Euclidean} distribution.

By sampling $x \in \R^3$ with density $\pi_{\R^3}(x)$, converting to the spherical coordinate system and discarding the radius $r$, we obtain a sample from the marginal distribution $\pi_{\mathbb{S}^2}$  on the sphere $\mathbb{S}^2$. The density $\pi_{\mathbb{S}^2}$ is given by:
\begin{equation}
    \label{eq:v2:spherical-marginalization} \pi_{\mathbb{S}^2}(s) = 
    \int_{\R_+} \pi_{\R^3}(rs) r^2 ~\mathrm{d}r  =
    \int_{\R_+} \pi_{\mathbb{S}^2\times\R_+}(s, r) ~\mathrm{d}r 
\end{equation}

This density can be evaluated using importance sampling using a non-vanishing density $\tilde{\pi}_{\R_+}$ on $\R_+$,
\begin{equation}\label{eq:v2:spherical-importance-sample}
    \pi_{\mathbb{S}^2}(s) = \underset{r\sim \tilde{\pi}_{\R_+}}{\mathbb{E}} \frac{\pi_{\mathbb{S}^2\times\R_+}(s, r)}{\tilde{\pi}_{\R_+}(r\vert s)}.
\end{equation}
We note that $\tilde{\pi}_{\R_+}$ may depend on $s\in\mathbb{S}^2$.
A visualization of these concepts is presented in \cref{fig:dequantization-sphere-density}.

\Cref{eq:v2:spherical-marginalization} {\it quantizes} $\mathbb{S}^2\times\R_+$ to $\mathbb{S}^2$ by integrating out the auxiliary dimension.
\Cref{eq:v2:spherical-importance-sample} describes how to marginalize $\pi_{\mathbb{S}^2\times\R_+}$ over $\R_+$ to compute the density of a quantized point $s\in \mathbb{S}^2$; hence, we refer to $\tilde{\pi}_{\R_+}$ as the {\it dequantization density}. 
Taking the log of \cref{eq:v2:spherical-importance-sample}, and invoking Jensen's inequality we can also obtain the following lower bound on the marginal log-probability of $s$:
\begin{align}
    \log \pi_{\mathbb{S}^2}(s) \geq \underset{r\sim \tilde{\pi}_{\R_+}}{\mathbb{E}} \log \frac{\pi_{\mathbb{S}^2\times\R_+}(s, r)}{\tilde{\pi}_{\R_+}(r)}. \label{eq:sphere-jensen}
\end{align}
Finally, we can use \cref{eq:spherical-change} to express both \cref{eq:v2:spherical-importance-sample} and \cref{eq:sphere-jensen} in terms of the density on the ambient Euclidean space $\R^3$.

% \subsection{Learning the Distributions: Sphere Example}

{\bf Learning the Distributions: Sphere Example.} In order to make these densities learnable, we introduce parameters $\theta\in\R^{n_\mathrm{amb}}$ and $\phi\in\R^{n_\mathrm{deq}}$ and write $\pi_{\R^3}(x)\equiv\pi_{\R^3}(x\vert \theta)$ and $\tilde{\pi}_{\R_+}(r)\equiv \tilde{\pi}_{\R_+}(r\vert \phi,s)$. For instance, $\pi_{\R^3}(x\vert\theta)$ could be a normalizing flow parameterized by $\theta$ and $\tilde{\pi}_{\R_+}(r\vert \phi, s)$ could be a log-normal distribution whose mean and variance parameters are determined by a neural network with parameters $\phi$ and input $s$.

We can use \cref{eq:sphere-jensen} as an objective function for performing density estimation on the sphere. Given samples $\mathcal{D} = (s_1,\ldots,s_{n_\mathrm{obs}})$ we compute a lower bound on the marginal log-probability as,
\begin{align}\label{eq:sphere-jensen-emp}
    \underset{s\sim\mathrm{Unif}(\mathcal{D})}{\mathbb{E}} \log\pi_{\mathbb{S}^2}(s) \geq \underset{s\sim\mathrm{Unif}(\mathcal{D})}{\mathbb{E}}\underset{r\sim \tilde{\pi}_{\R_+}}{\mathbb{E}} \log \frac{\pi_{\R^3}(rs\vert \theta)}{\tilde{\pi}_{\R_+}(r\vert s,\phi) / r^2}.
\end{align}
We can then maximize the right-hand side with respect to $\theta$ and $\phi$.
This yields a procedure for estimating the density on a sphere by transforming a density in an ambient Euclidean space and marginalizing over the radial dimension. 
The learned distribution on the ambient space $\pi_{\R^3}(x\vert \theta)$ is represented using a normalizing flow. This allows us to sample $x \sim \pi_{\R^3}(x\vert \theta)$ and to evaluate the density $\pi_{\R^3}(x\vert \theta)$. 
Therefore, to sample from the learned $\pi_{\mathbb{S}^2}$ we simply sample $x \sim \pi_{\R^3}(x\vert \theta)$, and then project it to the sphere: $s=x/\Vert x\Vert$.

% \subsection{Evaluating the Density: Sphere Example}

{\bf Evaluating the Density: Sphere Example.} Next, we may wish to evaluate the learned density $\pi_{\mathbb{S}^2}$. 
The normalizing flow provides us with the learned density $\pi_{\R^3}(x\vert \theta)$ in the ambient space,
which immediately gives us the density $\pi_{\mathbb{S}^2\times\R_+}$  through \cref{eq:spherical-change},
we must marginalize over $\R_+$ so as to obtain a density on $\mathbb{S}^2$.
We return to \cref{eq:v2:spherical-marginalization} and use the dequantization distribution $\tilde{\pi}_{\R_+}$ to evaluate $\pi_{\mathbb{S}^2}(s)$ using importance sampling.

\section{Related Work}\label{sec:related-work}

The general problem of density estimation is well studied and a full review is beyond the scope of this paper.  The most directly related work is in the area of density estimation with normalizing flows and we refer readers to the review articles by \citep{papamakarios2019normalizing} and \citep{kobyzev2020normalizing}.

From the perspective of estimating general densities on smooth manifolds, our work is related to \citep{DBLP:journals/corr/abs-2002-02428}, which considers normalizing flows on tori and spheres.  \citep{Bose2020} defines a class of normalizing flows on hyperbolic spaces. \citep{WANG2013113} considers the density of a multivariate normal random variable projected to the sphere via the mapping $x\mapsto x / \Vert x\Vert$. For methods on connected Lie groups, one may use the exponential map in order to smoothly parameterize an element of the group by a coordinate in Euclidean space; this was the approach adopted in \citep{pmlr-v89-falorsi19a}.  \citep{lou2020neural} extended continuous normalizing flows to manifolds by defining and simulating ordinary differential equations on the manifold.  In contrast to these approaches, the method proposed here allows for the use of any density estimation technique defined on the ambient Euclidean space.

This work was inspired by dequantization techniques for normalizing flows.  Originally introduced by \citep{uria2013rnade} to account for the discrete nature of pixel intensities, the basic approach added (uniform) continuous noise to discrete values.
This prevented the pathological behaviour that is known to occur when fitting continuous density models to discrete data \citep{theis2016note}.
The technique was extended by \citep{ho2019flow} to allow the added noise distribution to be learned using a variational objective which has become critical for strong quantitative performance in image modelling.  \citep{hoogeboom2020learning} generalized the variational objective used for learning the dequantization noise distribution.  Recently \citep{lippe2021categorical} extended normalizing flows to categorical distributions.  Our work proposes a new perspective on these approaches that the ambient Euclidean space (over which a continuous density model is learned) is a product of a discrete space (e.g., pixel intensities or discrete categories) and a continuous space which is projected (or ``quantized'') when data is observed.  Our work is a generalization of dequantization to consider not only discrete spaces, but also other structured spaces, specifically non-Euclidean manifolds.

\section{Preliminaries}\label{sec:preliminaries}

Let $\mathrm{Id}_n$ denote the $n\times n$ identity matrix. If $\mathcal{X}$ and $\mathcal{Y}$ are isomorphic sets we denote this by $\mathcal{X}\cong\mathcal{Y}$. The set of $\R$-valued, full-rank $n\times p$ matrices ($n\geq p$) is denoted $\mathrm{FR}(n, p)$. The set of full-rank $n\times n$ matrices is denoted $\mathrm{GL}(n)$, the generalized linear group. A matrix $\mathbf{P}\in\R^{n\times n}$ is positive definite if for all $x\in\R^n$ we have $x^\top \mathbf{P}x > 0$. The set of all $n\times n$ positive definite matrices is denoted $\mathrm{PD}(n)$. When $\mathbf{P}\in \mathrm{PD}(n)$ is a positive-definite matrix we denote the principal matrix square root by $\sqrt{\mathbf{P}}$. We now consider several examples of embedded manifolds; see also \cref{app:embedded-manifolds} for details on manifold embeddings. We focus our discussion around these examples because of their importance in data science applications.

\begin{example}[Hypersphere]

The sphere in $\R^m$ is defined by,
$\mathbb{S}^m = \set{x\in \R^m : x^\top x = 1}$.
The sphere is an important manifold for data with a directional component (such as the line-of-sight of an optical receiver) or data that naturally lies on the surface of a spherical body (such as occurrences of solar flares on a star).
\end{example}
\begin{example}[Torus]
The torus is the product manifold of circles $\mathbb{T}^n \defeq \underbrace{\mathbb{S}^1\times \cdots\times\mathbb{S}^1}_{n~\mathrm{times}}$.
The torus $\mathbb{T}^2=\mathbb{S}^1\times\mathbb{S}^1$ can be embedded in $\R^4$ by embedding each circle individually in $\R^2$.
The torus is an important manifold for studying systems with several angular degrees of freedom (such as applications to robotic arms) or systems with periodic boundaries \citep{DBLP:journals/corr/abs-2002-02428}. 
\end{example}
\begin{example}[Stiefel Manifold and Orthogonal Group]
The Stiefel manifold represents orthogonality constraints in a vector space. It may be regarded as the manifold of orthonormal vectors within a larger ambient Euclidean space. The Stiefel manifold can be leveraged for low-rank matrix completion. An important variant of the Stiefel manifold is the orthogonal group, which is the collection of all linear transformations of Euclidean space that preserve distance.
The Stiefel manifold of order $(n, p)$ is defined by, $\mathrm{Stiefel}(n, p) \defeq \set{\mathbf{M}\in \R^{n\times p} : \mathbf{M}^\top \mathbf{M} = \mathrm{Id}_n}$.
The $n$-dimensional orthogonal group is defined by $\mathrm{O}(n) = \mathrm{Stiefel}(n, n)$. 
Applications of the orthogonal group include the Procrustes problem, which describes the optimal rotation and reflection transformations that best align one cloud of particles toward another \citep{procrustes}. 
% The Stiefel manifold has found applications for low-rank matrix completion \citep{pmlr-v22-brubaker12}. 
The {\it special} orthogonal group is a subgroup of $\mathrm{O}(n)$ that satisfies the additional property that they have unit determinant. Formally, $\mathrm{SO}(n) \defeq \mathrm{O}(n) \cap \set{\mathbf{M}\in\R^{n\times n} : \mathrm{det}(\mathbf{M}) = 1}$.
\end{example}

\section{Theory}\label{sec:theory}

For our theoretical development, the most important tool is the change-of-variables formula for embedded manifolds; see {\it inter alia} \citep{DBLP:journals/corr/abs-2002-02428}.
\begin{theorem}\label{thm:manifold-change-of-variables}
Let $\mathcal{Y}$ and $\mathcal{Z}$ be smooth manifolds embedded in $\R^n$ and $\R^p$, respectively. Let $G:\R^m\to\mathcal{Y}\times\mathcal{Z}$ be a smooth, invertible transformation. Let $\pi_{\R^m}$ be a density on $\R^m$. Under the change-of-variables $G$, the corresponding density on $\mathcal{Y}\times\mathcal{Z}$ is given by,
\begin{align}
    \label{eq:manifold-change-of-variables} \pi_{\mathcal{Y}\times\mathcal{Z}}(y,z) = \frac{\pi_{\R^m}(x)}{\sqrt{\mathrm{det}(\nabla G(x)^\top \nabla G(x))}}
\end{align}
where $x=G^{-1}(y, z)$.
\end{theorem}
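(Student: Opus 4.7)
The plan is to derive the formula from the area formula of geometric measure theory together with conservation of probability mass. First I would note that since $G:\R^m\to \mathcal{Y}\times\mathcal{Z}\subset \R^{n+p}$ is a smooth bijection, the product manifold, viewed through its embedding in $\R^{n+p}=\R^n\times\R^p$, is a smooth $m$-dimensional submanifold globally parameterized by $G$. Its Riemannian volume measure $dV_{\mathcal{Y}\times\mathcal{Z}}$ (inherited from the Euclidean metric on $\R^{n+p}$) pulls back along $G$ to the measure $\sqrt{\det(\nabla G(x)^\top\nabla G(x))}\,dx$ on $\R^m$, since $\nabla G(x)^\top\nabla G(x)$ is precisely the Gram matrix of the coordinate tangent vectors at $G(x)$ and its square-root determinant is the standard volume element for a parameterized $m$-manifold sitting in Euclidean space.

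Next, I would invoke conservation of probability. Interpreting $\pi_{\mathcal{Y}\times\mathcal{Z}}$ as a density with respect to $dV_{\mathcal{Y}\times\mathcal{Z}}$, for every Borel set $A\subseteq \mathcal{Y}\times\mathcal{Z}$ with preimage $B=G^{-1}(A)\subseteq \R^m$, one has
\begin{align*}
\int_A \pi_{\mathcal{Y}\times\mathcal{Z}}(y,z)\,dV_{\mathcal{Y}\times\mathcal{Z}}(y,z)
&= \int_B \pi_{\R^m}(x)\,dx \\
&= \int_A \frac{\pi_{\R^m}(G^{-1}(y,z))}{\sqrt{\det(\nabla G(x)^\top \nabla G(x))}}\,dV_{\mathcal{Y}\times\mathcal{Z}}(y,z),
\end{align*}
where the first equality expresses that $G$ pushes the measure with density $\pi_{\R^m}$ to the measure with density $\pi_{\mathcal{Y}\times\mathcal{Z}}$, and the second equality is the change-of-variables through $G$ combined with the pullback identity for $dV_{\mathcal{Y}\times\mathcal{Z}}$ established above. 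Because this equality holds for every measurable $A$, the two integrands must agree almost everywhere, which is exactly \cref{eq:manifold-change-of-variables}.

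The main obstacle is conceptual rather than computational: one must specify that $\pi_{\mathcal{Y}\times\mathcal{Z}}$ is a density with respect to the Riemannian volume form induced by the embedding in $\R^{n+p}$, as opposed to some intrinsic Haar or reference measure on $\mathcal{Y}\times\mathcal{Z}$. Once this convention is fixed, everything reduces to the classical area formula for a diffeomorphic parameterization of an $m$-dimensional submanifold of Euclidean space, whose Jacobian factor is $\sqrt{\det(\nabla G^\top\nabla G)}$. A small side observation, not strictly needed for the statement but useful downstream, is that the ambient inner product on $\R^n\times\R^p$ splits as an orthogonal direct sum, so that $dV_{\mathcal{Y}\times\mathcal{Z}}=dV_{\mathcal{Y}}\otimes dV_{\mathcal{Z}}$; this justifies later marginalizations over one factor.
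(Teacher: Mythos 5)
Your proposal is correct and follows essentially the same route as the paper: both invoke the area formula, which pulls back the Riemannian volume measure on the embedded $m$-dimensional image manifold to $\sqrt{\det(\nabla G(x)^\top \nabla G(x))}\,dx$, and then argue by conservation of probability mass over measurable sets that the integrands (hence the densities) must agree almost everywhere. The paper's appendix develops the same idea in more detail via the induced and pullback Riemannian metrics, citing \citet{frank-jones} for the area formula, but the core argument and the required convention that $\pi_{\mathcal{Y}\times\mathcal{Z}}$ is a density with respect to the induced volume measure are exactly as you state.
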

Even when $G$ is not an invertible mapping, it may be possible to compute the change-of-variables when $G$ is invertible on partitions of $\R^m$.
\begin{corollary}\label{cor:partition-change-of-variables}
Let $\mathcal{O}_1,\ldots,\mathcal{O}_l$ be a partition of $\R^m$. Let $G: \R^m\to\mathcal{Y}\times\mathcal{Z}$ be a function and suppose that there exist smooth and invertible functions $G_i:\mathcal{O}_i\to\mathcal{Y}\times\mathcal{Z}$ such that $G_i = G\vert\mathcal{O}_i$ for $i = 1,\ldots, l$. Then, if $x\sim\pi_{\R^m}$, the density of $(y, z)=G(x)$ is given by $\pi_{\mathcal{Y}\times\mathcal{Z}}(y,z) = \sum_{i=1}^l \frac{\pi_{\R^m}(x_i)}{\sqrt{\mathrm{det}(\nabla G_i(x_i)^\top \nabla G_i(x_i))}}$ where $x_i = G^{-1}_i(y, z)$.
\end{corollary}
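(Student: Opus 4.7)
The plan is to verify the claimed density formula by testing against an arbitrary measurable $A \subseteq \mathcal{Y}\times\mathcal{Z}$ and showing that $\Pr[G(x)\in A]$ equals the integral of the proposed sum over $A$. Since the $\mathcal{O}_i$ partition $\R^m$, linearity of integration gives
\begin{align*}
\Pr[G(x)\in A] = \int_{\R^m}\pi_{\R^m}(x)\,\mathbf{1}_A(G(x))\,dx = \sum_{i=1}^l \int_{\mathcal{O}_i}\pi_{\R^m}(x)\,\mathbf{1}_A(G_i(x))\,dx,
\end{align*}
which reduces the problem to treating each summand in isolation.

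For each $i$, the map $G_i$ is smooth and bijective onto $\mathcal{Y}\times\mathcal{Z}$, so \cref{thm:manifold-change-of-variables} applies and rewrites the $i$-th integral as an integral over $\mathcal{Y}\times\mathcal{Z}$ with density $\pi_{\R^m}(G_i^{-1}(y,z))/\sqrt{\det(\nabla G_i(x_i)^\top\nabla G_i(x_i))}$, where $x_i = G_i^{-1}(y,z)$. Interchanging the finite sum with the integral then yields
\begin{align*}
\Pr[G(x)\in A] = \int_A \sum_{i=1}^l \frac{\pi_{\R^m}(x_i)}{\sqrt{\det(\nabla G_i(x_i)^\top\nabla G_i(x_i))}}\,d(y,z).
\end{align*}
Because $A$ is arbitrary, the integrand must be the density of the pushforward $(y,z) = G(x)$, which is exactly the claimed expression.

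The main obstacle is that \cref{thm:manifold-change-of-variables} is stated for a global map $\R^m \to \mathcal{Y}\times\mathcal{Z}$, whereas each $G_i$ is defined only on the partition piece $\mathcal{O}_i$. The resolution is to observe that the manifold change-of-variables formula is intrinsically local: its proof composes parametrizations of $\mathcal{Y}\times\mathcal{Z}$ with $G$ near each preimage point and computes the induced volume form, so one may either apply the formula directly on the open set $\mathcal{O}_i$ or extend $G_i$ smoothly to all of $\R^m$ in an arbitrary manner, since the extended values cannot influence the integral over $\mathcal{O}_i$. A secondary subtlety is ensuring the $\mathcal{O}_i$ are measurable and that $\pi_{\R^m}$ assigns no mass to their boundaries, which is automatic for any partition obtained by removing a Lebesgue-null set from $\R^m$; once that is addressed, the derivation is pure bookkeeping layered on top of \cref{thm:manifold-change-of-variables}.
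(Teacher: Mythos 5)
Your argument is correct and is the natural route; the paper offers no separate proof of this statement (it is labeled a corollary of \cref{thm:manifold-change-of-variables} and its appendix proves only the theorem), so there is no alternative derivation to compare against. The strategy you use---split $\Pr[G(x)\in A]$ over the partition $\{\mathcal{O}_i\}$, apply \cref{thm:manifold-change-of-variables} piecewise, interchange the finite sum and integral, and invoke arbitrariness of $A$ to read off the density---is exactly what the ``corollary'' label implies. You also correctly flag the two technical points that a careful writeup must close: first, that \cref{thm:manifold-change-of-variables} is stated for a globally defined $G$ but rests on an area formula that holds on any open domain, so it applies to each $G_i$ by restriction and there is no real need for the global smooth extension you float as an alternative (such an extension may not even exist for an irregular $\mathcal{O}_i$, so the restriction argument is the cleaner of your two resolutions); second, that the $\mathcal{O}_i$ should be measurable with Lebesgue-null boundaries so that equality of integrals over all $A$ pins down the density almost everywhere. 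With those caveats addressed as you indicate, the derivation is sound.
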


How does \cref{thm:manifold-change-of-variables} relate to the dequantization of smooth manifolds? Manifolds (such the sphere, the torus, or the orthogonal group) can be introduced as elements of a new coordinate system for an ambient Euclidean space. By marginalizing out the other dimensions of the new coordinate system, we obtain the distribution on the manifold of interest. We have already seen an example of this in \cref{sec:illustrative-example-sphere} where $\R^3$ was transformed into a spherical coordinate system. We generalize this as follows:

\begin{example}[Hypersphere]\label{ex:hyper-sphere-coordinate-transformation}
The hyperspherical coordinate transformation giving the isomorphism of $\R^m\setminus\set{0}$ and $\mathbb{S}^{m-1}\times\R_+$ is defined by,
$x\mapsto (x / r, r)$ where $r = \Vert x\Vert_2$. The inverse transformation is $(s, r)\mapsto rs$. The Jacobian determinant of the hyperspherical coordinate transformation is $1 / r^{m-1}$.
Hence, given a density on $\R^m$ (equivalent to $\R^m\setminus\set{0}$ up to a set of Lebesgue measure zero) we may compute the change-of-variables so as to obtain a density on $\mathbb{S}^{m-1}\times\R_+$ by applying \cref{thm:manifold-change-of-variables} and the fact that the Jacobian determinant of the transformation is $1 / r^{m-1}$.
\end{example}

\begin{example}[Torus]\label{ex:torus-coordinate-transformation}
From the fact that $\R^2 \cong \mathbb{S}^1\times\R_+$ (a consequence of the polar coordinate transformation), we may similarly develop a coordinate system of even-dimensional Euclidean space in which the torus appears. Recall that $\mathbb{T}^m \cong \mathbb{S}_1\times\cdots\times\mathbb{S}^1$ (there are $m$ terms in the product). Therefore, we have established that $\R^{2m}\setminus\set{0} \cong \mathbb{T}^m \times\underbrace{\R_+\times\cdots\times\R_+}_{m~\mathrm{times}}$.
The isomorphism can be explicitly constructed by writing $x=(x_{1,1},x_{1,2},\ldots,x_{m,1}, x_{m,2})$ and defining the map $G:\R^{2m}\setminus\set{0} \to\mathbb{T}^m\times  \R_+\times\ldots\times\R_+$ by $G(x) \defeq (x_1 / r_1, \ldots, x_m / r_m, r_1,\ldots, r_m)$ where $r_i = \sqrt{x_{i, 1}^2 + x_{i,2}^2}$. As the concatenation of $m$ polar coordinate transformations, the Jacobian determinant of this transformation is $\prod_{i=1}^m r_i^{-1}$. The inverse transformation from $\mathbb{T}^m \times\R_+\times\ldots\times\R_+$ back to $\R^{2m}\setminus\set{0}$ is given by $G^{-1}(s_1,\ldots,s_m, r_1,\ldots,r_m) = (r_1s_1,\ldots, r_ms_m)$ where $(s_1,\ldots,s_m) \in \mathbb{S}^1\times\ldots\times\mathbb{S}^1\cong\mathbb{T}^m$. In the case of the torus, the product $\R_+\times\ldots\times\R_+$ is the dequantization dimension of $\mathbb{T}^m$ into $\R^{2m}\setminus \set{0}$. We describe a different means of dequantization, called {\it modulus dequantization}, in \cref{app:torus-modulus-dequantization}.
\end{example}

\subsection{Stiefel Manifold}\label{subsec:stiefel-manifold}

The coordinate transformation in which the Stiefel manifold appears requires a more involved construction than was the case for the sphere or torus. We first recall the relationship between the Stiefel manifold, full-rank matrices, and positive definite matrices: $\mathrm{FR}(n, p) \cong \mathrm{Stiefel}(n, p)\times \mathrm{PD}(p)$.%  Our first result relates the Stiefel manifold, full-rank matrices, and positive definite matrices.
The isomorphism of these spaces is constructed by the {\it polar decomposition}. Given $\mathbf{M}\in \mathrm{FR}(n, p, \R)$, define, $\mathbf{P} \defeq \sqrt{\mathbf{M}^\top \mathbf{M}} \in \mathrm{PD}(p)$ and $\mathbf{O} \defeq \mathbf{M}\mathbf{P}^{-1} \in \mathrm{Stiefel}(n, p)$.
Thus,  the isomorphism is given by $\mathbf{M} \mapsto (\mathbf{O}, \mathbf{P})$. This isomorphism is rigorously established in \cref{app:polar-decomposition-isomorphism}.

Integrating over $\mathrm{PD}(n)$ may be non-trivial due to the complicated structure of the positive-definite manifold of matrices. However, there is an isomorphism of $\mathrm{PD}(n)$ and $\mathrm{Tri}_+(n)$, the set of $n\times n$ lower-triangular matrices with strictly positive entries on the diagonal. The isomorphism is given by the Cholesky decomposition: If $\mathbf{P}\in \mathrm{PD}(n)$ then there is a unique matrix $\mathbf{L}\in\mathrm{Tri}_+(n)$ such that $\mathbf{P}=\mathbf{L}\mathbf{L}^\top$. We have therefore proved $\mathrm{FR}(n, p) \cong \mathrm{Stiefel}(n, p)\times \mathrm{Tri}_+(p)$.
One can use automatic differentiation in order to compute the Jacobian determinant of the transformation defined by $\mathbf{M}\mapsto (\mathbf{O},\mathbf{L})$. We call the transformation $\mathbf{M}\mapsto (\mathbf{O},\mathbf{L})$ the {\it Cholesky polar decomposition}. The inverse transformation is $(\mathbf{O},\mathbf{L})\mapsto \mathbf{O}\mathbf{L}\mathbf{L}^\top$

The Stiefel manifold $\mathrm{Stiefel}(n, p)$ is a generalization of the orthogonal group $\mathrm{O}(n)$ and we recover the latter exactly when $n=p$. In this case, we identify $\mathrm{FR}(n,n)$ as $\mathrm{GL}(n)$ so that we obtain $\mathrm{GL}(n)\cong \mathrm{O}(n) \times \mathrm{PD}(n)$.

Finally we observe that almost all $n\times p$ matrices are full-rank. Therefore, if one has a density in $\R^{n\times p}$ then we may apply the polar decomposition coordinate transformation in order to obtain a density on $\mathrm{Stiefel}(n,p)\times\mathrm{PD}(p)$. Here $\mathrm{Tri}_+(p)$ plays the role of the dequantization dimension of $\mathrm{Stiefel}(n, p)$ into $\mathrm{FR}(n, p)$. An alternative coordinate transformation based on the QR decomposition instead of the polar decomposition is given in \cref{app:stiefel-qr-decomposition}.

\subsection{Dequantization}

We have now seen how several manifolds appear in coordinate systems. In each case, the manifold appears with an auxiliary manifold which may not be of immediate interest. Namely, (i) The sphere appears with set of positive real numbers when defining a coordinate system for $\R^m\setminus \set{0}\cong \mathbb{S}^{m-1} \times\R_+$; (ii) The torus appears the product manifold of $m$ copies of the positive real numbers when defining a coordinate system for $\R^{2m}\setminus \set{0}\cong \mathbb{T}^m\times\R_+\times\ldots\times\R_+$; (iii) the Stiefel manifold appears with the set of lower-triangular matrices with positive diagonal entries when defining a coordinate system of $\mathrm{FR}(n, p)\cong \mathrm{Stiefel}(n, p)\times \mathrm{Tri}_+(p)$. 
We would like to marginalize out these ``nuisance manifolds'' so as to obtain distributions on the manifold of primary interest. A convenient means to achieve this is to introduce an importance sampling distribution over the nuisance manifold. Formally, we have the following result, which is an immediate consequence of \cref{thm:manifold-change-of-variables}.

\begin{corollary}\label{cor:importance-sampling-change-of-variables}
Let $\mathcal{Y}$, $\mathcal{Z}$, $G$, and $\pi_{\mathcal{Y}\times\mathcal{Z}}$ be as defined in \cref{thm:manifold-change-of-variables}. Let $\tilde{\pi}_{\mathcal{Z}}$ be a non-vanishing density on $\mathcal{Z}$. To obtain the marginal density on $\mathcal{Y}$, let $x = G^{-1}(y, z)$ and it suffices to compute,
\begin{align}
    \label{eq:marginal-importance-sample-euclidean} \pi_{\mathcal{Y}}(y) = \underset{z\sim\tilde{\pi}_{\mathcal{Z}}}{\mathbb{E}} \frac{\pi_{\mathcal{X}}(x)}{\tilde{\pi}_{\mathcal{Z}}(z) \cdot \sqrt{\mathrm{det}(\nabla G(x)^\top \nabla G(x))}}.
\end{align}
\end{corollary}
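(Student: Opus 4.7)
The plan is to chain together three ingredients: the manifold change-of-variables from \cref{thm:manifold-change-of-variables}, the standard marginalization identity for a joint density on $\mathcal{Y}\times\mathcal{Z}$, and the importance-sampling rewrite that is legal precisely because $\tilde{\pi}_{\mathcal{Z}}$ is non-vanishing. No new geometric content is required beyond what \cref{thm:manifold-change-of-variables} already supplies; the corollary is essentially a bookkeeping exercise that packages the joint density in a form amenable to Monte Carlo estimation.

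First I would invoke \cref{thm:manifold-change-of-variables} directly to write the joint density on the product manifold as
\begin{align*}
\pi_{\mathcal{Y}\times\mathcal{Z}}(y,z) \;=\; \frac{\pi_{\R^m}(x)}{\sqrt{\det\bigl(\nabla G(x)^\top \nabla G(x)\bigr)}}, \qquad x = G^{-1}(y,z).
\end{align*}
Next I would marginalize the $\mathcal{Z}$ coordinate against its induced volume form to obtain
\begin{align*}
\pi_{\mathcal{Y}}(y) \;=\; \int_{\mathcal{Z}} \pi_{\mathcal{Y}\times\mathcal{Z}}(y,z)\,\mathrm{d}z.
\end{align*}
Finally, because $\tilde{\pi}_{\mathcal{Z}}(z) > 0$ everywhere on $\mathcal{Z}$, I can multiply and divide the integrand by $\tilde{\pi}_{\mathcal{Z}}(z)$ and recognize the result as an expectation:
\begin{align*}
\pi_{\mathcal{Y}}(y) \;=\; \int_{\mathcal{Z}} \frac{\pi_{\mathcal{Y}\times\mathcal{Z}}(y,z)}{\tilde{\pi}_{\mathcal{Z}}(z)}\,\tilde{\pi}_{\mathcal{Z}}(z)\,\mathrm{d}z \;=\; \underset{z\sim\tilde{\pi}_{\mathcal{Z}}}{\mathbb{E}} \frac{\pi_{\mathcal{Y}\times\mathcal{Z}}(y,z)}{\tilde{\pi}_{\mathcal{Z}}(z)}.
\end{align*}
Substituting the explicit formula for $\pi_{\mathcal{Y}\times\mathcal{Z}}(y,z)$ from the first step then yields exactly \cref{eq:marginal-importance-sample-euclidean}.

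The only subtlety I expect is conceptual rather than computational: one should make clear that the marginalization integral is taken against the Riemannian volume element on $\mathcal{Z}$ inherited from its embedding in $\R^p$, and that the Jacobian factor in \cref{thm:manifold-change-of-variables} is defined with respect to this same volume. Once the measures are correctly identified, the importance-sampling rewrite is a one-line consequence of the fact that $\tilde{\pi}_{\mathcal{Z}}$ is strictly positive, so no measure-theoretic pathology arises and Fubini/change-of-variable can be applied without additional hypotheses.
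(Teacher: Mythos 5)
Your proof is correct and follows exactly the route the paper intends: apply Theorem~\ref{thm:manifold-change-of-variables} to express the joint density on $\mathcal{Y}\times\mathcal{Z}$, marginalize $z$ against the induced volume form on $\mathcal{Z}$, and rewrite the integral as an expectation under $\tilde{\pi}_{\mathcal{Z}}$ by the usual importance-sampling trick. The paper itself does not spell out a separate proof, calling the result an ``immediate consequence'' of the theorem, and your argument mirrors the concrete sphere derivation in \cref{eq:v2:spherical-marginalization}--\cref{eq:v2:spherical-importance-sample} that serves as the prototype; your closing remark about making sure the marginalization is taken against the Riemannian volume element on $\mathcal{Z}$ (so that the product volume form factors and the Jacobian factor is defined consistently) is a worthwhile clarification that the paper leaves implicit.
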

We consider some examples of marginalizing out the nuisance manifolds in some cases of interest.

\begin{example}[Hypersphere]\label{ex:sphere-marginalization}
Let $G : \R^m\to\mathbb{S}^{m-1}\times\R_+$ be the hyperspherical coordinate transformation described in \cref{ex:hyper-sphere-coordinate-transformation}. Given a density $\pi_{\R^m}$ on $\R^m$, the manifold change-of-variables formula says that the corresponding density on $\mathrm{S}^{m-1}\times\R_+$ is $\pi_{\mathrm{S}^{m-1}\times\R_+}(s, r) = r^{m-1}\cdot \pi_{\R^m}(rs).$
To obtain the marginal distribution $\pi_{\mathbb{S}^{m-1}}$ we compute, $\pi_{\mathbb{S}^{m-1}}(s) = \int_{\R_+} r^{m-1}\cdot\pi_{\R^m}(rs) ~\mathrm{d}r$.
Or, using an importance sampling distribution $\tilde{\pi}_{\R_+}$ whose support is $\R_+$, we obtain, $\pi_{\mathbb{S}^{m-1}}(s) = \underset{r\sim \tilde{\pi}_{\R_+}}{\mathbb{E}} \frac{r^{m-1}\cdot\pi_{\R^m}(rs)}{\tilde{\pi}_{\R_+}(r)}$.
\end{example}

\begin{example}[Torus]
Let $G$ be the toroidal coordinate transformation described in \cref{ex:torus-coordinate-transformation}. If one has a density $\pi_{\R^4} :\R^4\to\R_+$ then the associated density under the change-of-variables $G$ is $\pi_{\mathbb{T}^2\times \R_+\times\R_+}(s_1,s_2,r_1,r_2) = r_1r_2 \cdot \pi_{\R^4}((r_1s_1,r_2s_2))$.
Similar to \cref{ex:sphere-marginalization}, one can introduce an importance sampling distribution on $\R_+\times\R_+$ so as to obtain the marginal distribution on the torus as $\pi_{\mathbb{T}^2}(s_1,s_2) = \underset{r_1,r_2\sim \tilde{\pi}_{\R_+\times\R_+}}{\mathbb{E}} \frac{\pi_{\R^4}((r_1s_1,r_2s_2))}{1/ r_1r_2}$.
\end{example}

\begin{example}[Stiefel Manifold]\label{ex:stiefel-dequantization}
Let $G$ be the Cholesky polar decomposition coordinate transformation described in \cref{subsec:stiefel-manifold}. Given a density in $\pi_{\R^{n\times p}}:\R^{n\times p}\to\R_+$, we can construct the corresponding density on $\mathrm{Stiefel}(n, p)\times \mathrm{Tri}_+(p)$ by applying \cref{thm:manifold-change-of-variables} as $\pi_{\mathrm{Stiefel}(n, p)\times \mathrm{Tri}_+(p)}(\mathbf{O}, \mathbf{L}) = \frac{\pi_{\R^{n\times p}}(\mathbf{O}\mathbf{P})}{\mathcal{J}(\mathbf{O}\mathbf{P})}$
where $\mathbf{P}=\mathbf{L}\mathbf{L}^\top$ and $\mathcal{J}(\mathbf{O}\mathbf{P}) \defeq \sqrt{\mathrm{det}((\nabla G(\mathbf{O}\mathbf{P}))^\top (\nabla G(\mathbf{O}\mathbf{P})))}$. To construct a importance sampling distribution over $\mathrm{Tri}_+(p)$, one could generate the diagonal entries of $\mathbf{L}_{ii} \sim\mathrm{LogNormal}(\mu_i, \sigma^2_i)$ and the remaining entries in the lower triangle according to $\mathbf{L}_{ij}\sim\mathrm{Normal}(\mu_{ij}, \sigma_{ij}^2)$. Applying \cref{cor:importance-sampling-change-of-variables} gives the importance sampling formula for the marginal density on $\mathrm{Stiefel}(n, p)$:
\begin{align}
    \pi_{\mathrm{Stiefel}(n, p)}(\mathbf{O}) = \underset{\mathbf{L}\sim \tilde{\pi}_{\mathrm{Tri}_+}}{\mathbb{E}} \frac{\pi_{\R^{n\times p}}(\mathbf{O}\mathbf{P})}{\mathcal{J}(\mathbf{O}\mathbf{P})\cdot \tilde{\pi}_{\mathrm{Tri}_+}(\mathbf{L})}.
\end{align}
\end{example}
% A further example of applying dequantization to the {\it integers} is given in \cref{app:integer-dequantization}.

\section{Discussion}\label{sec:applications}

\begin{algorithm}[t!]
\caption{Training loop for dequantization inference. The target density $\pi_{\mathcal{Y}}$ is only relevant insofar as we must have samples available from it. The algorithm produces $\theta$ and $\phi$ that parameterize a distribution on $\R^m$ and a dequantization density on $\mathcal{Z}$. Together, these two densities can be combined to compute an estimate of the density using the right-hand side of \cref{eq:marginal-importance-sample-euclidean}.}
\label{alg:dequantization}
\begin{algorithmic}[1]
\STATE \textbf{Input}: Samples from target density $\mathcal{D} \defeq (y_{1}, \ldots, y_{B})$ on an embedded manifold $\mathcal{Y}\subset\R^m$, step-size $\epsilon\in\R_+$.
\STATE Identify a smooth change-of-variables $G:\R^m\to \mathcal{Y}\times\mathcal{Z}$ where $\mathcal{Z}\subset \R^p$ is an auxiliary structure.
\STATE Let $\pi_{\R^m}$ be a density on $\R^m$ that is smoothly parameterized by $\theta\in \R^{n_\mathrm{amb}}$.
\STATE Let $\tilde{\pi}_{\mathcal{Z}}$ be a density on $\mathcal{Z}$ that is continuously parameterized by $y\in\mathcal{Y}$ and smoothly parameterized by $\phi\in \R^{n_{\mathrm{deq}}}$.
\WHILE {Not Done}
\STATE Use \cref{eq:evidence-lower-bound} or \cref{eq:log-likelihood} to compute an average loss over $\mathcal{D}$:
\begin{align}
    \mathcal{L}(\theta,\phi\vert \mathcal{D}) = \underset{y\sim\mathrm{Unif}(\mathcal{D})}{\mathbb{E}} \mathcal{F}(y \vert\theta,\phi)
\end{align}
where $\mathcal{F}$ is the right-hand side of \cref{eq:evidence-lower-bound} or \cref{eq:log-likelihood}, respectively.
\STATE Update $\theta = \theta + \epsilon \nabla_{\theta} \mathcal{L}(\theta,\phi\vert \mathcal{D})$ and $\phi = \phi + \epsilon \nabla_{\phi} \mathcal{L}(\theta,\phi\vert \mathcal{D})$.
\ENDWHILE
\STATE \textbf{Output}: Parameterized densities $\pi_{\R^m}(\cdot\vert \theta)$ and $\tilde{\pi}_{\mathcal{Z}}(\cdot\vert y,\phi)$ that can be used to perform density estimation on $\mathcal{Y}$ using \cref{eq:marginal-importance-sample-euclidean}.
\end{algorithmic}
\end{algorithm}

We investigate the problem of density estimation given observations on a manifold using the dequantization procedure described in \cref{sec:theory}. Let $\mathcal{Y}$ be a manifold embedded in $\R^n$ and let $\pi_\mathcal{Y}$ be a density on $\mathcal{Y}$. Given observations of $\pi_\mathcal{Y}$, we wish to construct an estimate $\hat{\pi}_\mathcal{Y}$ of the density $\pi_\mathcal{Y}$. \Cref{alg:dequantization} shows how we may apply dequantization for the purposes of density estimation provided that we have samples from the target density. We apply \cref{eq:marginal-importance-sample-euclidean} in order to obtain the density estimate on $\mathcal{Y}$. Generating samples from $\pi_\mathcal{Y}$ may be achieved by first sampling $x\sim\pi_\mathcal{X}$, applying the transformation $G(x) = (y, z)$, and taking $y$ as a sample from the approximated distribution $\hat{\pi}_\mathcal{Y}$.

% \subsection{Densities on $\R^m$}

{\bf Densities on $\R^m$.} As $\R^m$ is a Euclidean space, we have available a wealth of possible mechanisms to produce flexible densities in the ambient space. One popular choice is RealNVP \citep{DBLP:conf/iclr/DinhSB17}.
In this case $\theta$ is the parameters of the underlying RealNVP network. 
An alternative is neural ODEs wherein $\theta$ parameterizes a vector field in the Euclidean space; the change in probability density under the vector field flow is obtained by integrating the instantaneous change-of-variables formula \citep{NEURIPS2018_69386f6b,grathwohl2018ffjord}.

% \subsection{Objective Functions}

{\bf Objective Functions.} We consider two possible objective functions for  density estimation. The first is the evidence lower bound of the observations $\set{y_1,\ldots,y_{n_\mathrm{obs}}}$:
\begin{align}
\label{eq:evidence-lower-bound} \log \hat{\pi}_\mathcal{Y}(y_i) \geq \underset{z\sim\tilde{\pi}_{\mathcal{Z}}}{\mathbb{E}}\log \frac{\pi_{{\R^m}}(G^{-1}(y_i, z))}{\tilde{\pi}_{\mathcal{Z}}(z) \cdot \sqrt{\mathrm{det}(\nabla G(x)^\top \nabla G(x))}}.
\end{align}
This follows as a consequence of Jensen's inequality applied to \cref{eq:marginal-importance-sample-euclidean}. Experimental results using this objective function are denoted with the suffix (ELBO). The second is the  log-likelihood computed via importance sampling:
\begin{align}
    \label{eq:log-likelihood}  \log \hat{\pi}_{\mathcal{Y}} (y_i) = \log \underset{z\sim\tilde{\pi}_{\mathcal{Z}}}{\mathbb{E}} \frac{\pi_{{\R^m}}(G^{-1}(y_i, z))}{\tilde{\pi}_{\mathcal{Z}}(z) \cdot \sqrt{\mathrm{det}(\nabla G(x)^\top \nabla G(x))}}.
\end{align}
Because the calculation of \cref{eq:log-likelihood} requires an importance sampling estimate, experimental results using this objective function are denoted with the suffix (I.S.). 

\section{Experimental Results}\label{sec:experiments}

To demonstrate the effectiveness of the approach, we now show experimental results for density estimation on three different manifolds: the sphere, the torus and the orthogonal group. In our comparison against competing algorithms, we ensure that each method has a comparable number of learnable parameters. Our evaluation metrics are designed to test the fidelity of the density estimate to the target distribution; details on evaluation metrics are given in \cref{app:evaluation-metrics}. In all of our examples we use rejection sampling in order to draw samples from the target distribution.

% \subsection{Sphere and Hypersphere}\label{subsec:experiment-sphere}

\begin{figure}[t!]
    \centering
    \begin{subfigure}[t]{0.24\textwidth}
        \centering
        \includegraphics[width=\textwidth]{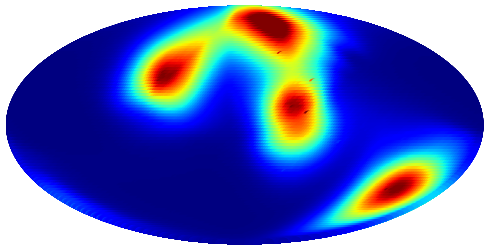}
        \caption{}
    \end{subfigure}
    ~
    \begin{subfigure}[t]{0.24\textwidth}
        \centering
        \includegraphics[width=\textwidth]{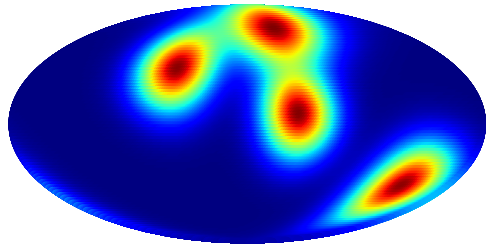}
        \caption{}
    \end{subfigure}
    ~
    \begin{subfigure}[t]{0.48\textwidth}
        \centering
        \includegraphics[width=\textwidth]{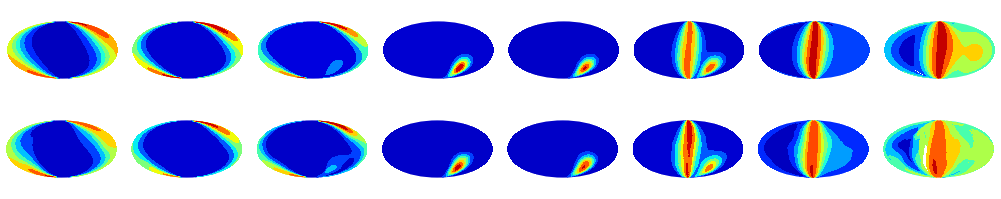}
        \caption{}
    \end{subfigure}
    \caption{Figures (a) and (b): Comparison of the density on $\mathbb{S}^2$ learned via RealNVP dequantization with a KL-divergence loss estimated via importance sampling. The dequantization procedure is able to identify all modes of the density correctly. Figure (c): Comparison of the density on $\mathbb{S}^3$ learned via RealNVP dequantization with a KL-divergence loss. We visualize $\mathbb{S}^2$-slices of $\mathbb{S}^3$ by examining the density when one of the hyperspherical coordinates is held fixed. The top set of slices shows the approximate density obtained via dequantization and the target density. The KL divergence is 0.01269 and the relative effective sample size is 97.70\%.}
    \label{fig:sphere-density}
\end{figure}
\begin{table*}[t!]
    \centering
    \caption{Comparison of dequantization to normalizing flows on the multimodal density on $\mathbb{S}^2$. Averages were computed using ten random trials for the dequantization procedures and eight random trials for the normalizing flow (because two random trials exhibited divergent behavior and were excluded). The dequantization procedure is illustrated for both the ELBO loss and the KL divergence loss.}
    \vspace{1em}
    \scriptsize
    \begin{tabular}{l|rrrrr}
        Method & Mean MSE & Covariance MSE & $\mathrm{KL}(q \Vert p)$ & $\mathrm{KL}(p\Vert q)$ & Relative ESS \\ \bottomrule
Deq. ODE (ELBO) & 0.0012 $\pm$ 0.0002 & 0.0006 $\pm$ 0.0001 & 0.0046 $\pm$ 0.0002 & 0.0046 $\pm$ 0.0002 & 99.0990 $\pm$ 0.0401 \\
Deq. ODE (I.S.) & 0.0014 $\pm$ 0.0002 & 0.0010 $\pm$ 0.0001 & 0.0029 $\pm$ 0.0001 & 0.0029 $\pm$ 0.0001 & 99.4170 $\pm$ 0.0225 \\
Deq RealNVP (ELBO) & 0.0004 $\pm$ 0.0001 & 0.0003 $\pm$ 0.0001 & 0.0231 $\pm$ 0.0010 & 0.0212 $\pm$ 0.0009 & 95.9540 $\pm$ 0.1688 \\
Deq. RealNVP (I.S.) & 0.0005 $\pm$ 0.0002 & 0.0002 $\pm$ 0.0000 & 0.0124 $\pm$ 0.0006 & 0.0115 $\pm$ 0.0006 & 97.8240 $\pm$ 0.1183 \\
Man. ODE & 0.0010 $\pm$ 0.0004 & 0.0009 $\pm$ 0.0002 & 0.0085 $\pm$ 0.0007 & 0.0083 $\pm$ 0.0007 & 98.3860 $\pm$ 0.1328 \\
M\"{o}bius & 0.0021 $\pm$ 0.0005 & 0.0019 $\pm$ 0.0005 & 0.0595 $\pm$ 0.0025 & --- & 89.2575 $\pm$ 0.4888 \\
\bottomrule
    \end{tabular}
    \label{tab:sphere-comparison}
\end{table*}
\begin{table*}[t!]
    \centering
    \caption{Comparison of dequantization to normalizing flows on the multimodal density on $\mathbb{S}^3$. Averages were computed using ten random trials for the dequantization procedures and nine random trials for the normalizing flow (one random trial exhibited divergent behavior and was excluded).}
    \vspace{1em}
    \scriptsize
    \begin{tabular}{l|rrrrr}
        Method & Mean MSE & Covariance MSE & $\mathrm{KL}(q \Vert p)$ & $\mathrm{KL}(p\Vert q)$ & Relative ESS \\ \bottomrule
Deq. ODE (ELBO) & 0.0009 $\pm$ 0.0001 & 0.0007 $\pm$ 0.0001 & 0.0072 $\pm$ 0.0002 & 0.0070 $\pm$ 0.0002 & 98.6490 $\pm$ 0.0388 \\
Deq. ODE (I.S.) & 0.0017 $\pm$ 0.0001 & 0.0022 $\pm$ 0.0002 & 0.0189 $\pm$ 0.0004 & 0.0180 $\pm$ 0.0004 & 96.6150 $\pm$ 0.0648 \\
Deq. RealNVP (ELBO) & 0.0003 $\pm$ 0.0001 & 0.0004 $\pm$ 0.0001 & 0.0384 $\pm$ 0.0010 & 0.0283 $\pm$ 0.0005 & 95.1880 $\pm$ 0.0771 \\
Deq. RealNVP (I.S.) & 0.0003 $\pm$ 0.0001 & 0.0003 $\pm$ 0.0000 & 0.0208 $\pm$ 0.0004 & 0.0180 $\pm$ 0.0004 & 96.6340 $\pm$ 0.0920 \\
Man. ODE & 0.0012 $\pm$ 0.0003 & 0.0008 $\pm$ 0.0002 & 0.0098 $\pm$ 0.0009 & 0.0094 $\pm$ 0.0007 & 98.1780 $\pm$ 0.1302 \\
M\"{o}bius & 0.0027 $\pm$ 0.0004 & 0.0014 $\pm$ 0.0003 & 0.0542 $\pm$ 0.0047 & --- & 88.7290 $\pm$ 0.9332 \\ 
\bottomrule
    \end{tabular}
    \label{tab:hyper-sphere-comparison}
\end{table*}

{\bf Sphere and Hypersphere.} Our first experimental results concern the sphere $\mathbb{S}^2$ where we consider a multimodal distribution with four modes. This density on $\mathbb{S}^2$ which is visualized in \cref{fig:sphere-density}. We consider performing density estimation using the ELBO (\cref{eq:evidence-lower-bound}) and  log-likelihood objective functions (\cref{eq:log-likelihood}); we construct densities in the ambient space using RealNVP and neural ODEs. We use \cref{alg:dequantization} to learn the parameters of the ambient and dequantization distributions. As baselines we consider the M\"{o}bius transform approach described in \citep{DBLP:journals/corr/abs-2002-02428}, which is a specialized normalizing flow method for tori and spheres, and the neural manifold ODE applied to the sphere as described in \citep{lou2020neural}. We give a comparison of performance metrics between these methods in \cref{tab:sphere-comparison}. In these experiments, we find that parameterizing a neural ODE model in the ambient space gave the better KL-divergence and effective sample size (ESS) metrics than RealNVP when our dequantization approach is used. We find that our dequantization algorithm minimizing either \cref{eq:evidence-lower-bound} or \cref{eq:log-likelihood} achieves similar performance in the first and second moment metrics. However, when using \cref{eq:log-likelihood}, slightly lower KL-divergence metrics are achievable as well as slightly larger effective sample sizes. In either case, dequantization outperforms the M\"{o}bius transform on this multimodal density on $\mathbb{S}^2$. The manifold ODE method is outperformed by the ODE dequantization algorithms with both \cref{eq:evidence-lower-bound} and \cref{eq:log-likelihood}.

\begin{table*}[t!]
    \centering
    \caption{Comparison of Deq. RealNVP to M\"{o}bius Flow flows on the multimodal density on $\mathbb{T}^2$. Averages were computed using ten random trials for the Deq. RealNVP procedures, direct, and M\"{o}bius Flow flow procedures.}
    \vspace{1em}
    \tiny
    \begin{tabular}{ll|rrrrr}
        Density & Method & Mean MSE & Covariance MSE & $\mathrm{KL}(q \Vert p)$ & $\mathrm{KL}(p\Vert q)$ & Relative ESS \\ \bottomrule
Correlated & Deq. RealNVP (ELBO) & 0.0019 $\pm$ 0.0007 & 0.0098 $\pm$ 0.0019 & 0.0072 $\pm$ 0.0005 & 0.0084 $\pm$ 0.0006 & 98.6510 $\pm$ 0.0958 \\
& Deq. RealNVP (I.S.) & 0.0024 $\pm$ 0.0007 & 0.0121 $\pm$ 0.0041 & 0.0038 $\pm$ 0.0004 & 0.0040 $\pm$ 0.0004 & 99.2570 $\pm$ 0.0796 \\
& Modulus & 0.0017 $\pm$ 0.0005 & 0.0072 $\pm$ 0.0021 & 0.0025 $\pm$ 0.0006 & 0.0025 $\pm$ 0.0006 & 99.5000 $\pm$ 0.1153 \\
& M\"{o}bius & 0.0004 $\pm$ 0.0002 & 0.0083 $\pm$ 0.0029 & 0.0021 $\pm$ 0.0002 & --- & 99.5850 $\pm$ 0.0332 \\ \midrule

Unimodal & Deq. RealNVP (ELBO) & 0.0010 $\pm$ 0.0003 & 0.0133 $\pm$ 0.0040 & 0.0066 $\pm$ 0.0004 & 0.0079 $\pm$ 0.0004 & 98.7240 $\pm$ 0.0761 \\
& Deq. RealNVP (I.S.) & 0.0011 $\pm$ 0.0003 & 0.0081 $\pm$ 0.0020 & 0.0021 $\pm$ 0.0002 & 0.0023 $\pm$ 0.0002 & 99.5980 $\pm$ 0.0317 \\
 & Modulus & 0.0014 $\pm$ 0.0003 & 0.0164 $\pm$ 0.0028 & 0.0028 $\pm$ 0.0003 & 0.0028 $\pm$ 0.0003 & 99.4340 $\pm$ 0.0531 \\
& M\"{o}bius & 0.0006 $\pm$ 0.0002 & 0.0055 $\pm$ 0.0037 & 0.0008 $\pm$ 0.0001 & --- & 99.8490 $\pm$ 0.0293 \\ \midrule

Multimodal & Deq. RealNVP (ELBO) & 0.0024 $\pm$ 0.0006 & 0.0158 $\pm$ 0.0026 & 0.0065 $\pm$ 0.0002 & 0.0075 $\pm$ 0.0002 & 98.7800 $\pm$ 0.0379 \\
& Deq. RealNVP (I.S.) & 0.0007 $\pm$ 0.0002 & 0.0061 $\pm$ 0.0014 & 0.0020 $\pm$ 0.0001 & 0.0022 $\pm$ 0.0002 & 99.6160 $\pm$ 0.0288 \\
& Modulus & 0.0016 $\pm$ 0.0007 & 0.0063 $\pm$ 0.0024 & 0.0035 $\pm$ 0.0004 & 0.0035 $\pm$ 0.0004 & 99.3030 $\pm$ 0.0725 \\
& M\"{o}bius & 0.0006 $\pm$ 0.0001 & 0.0070 $\pm$ 0.0039 & 0.0012 $\pm$ 0.0002 & --- & 99.7600 $\pm$ 0.0358 \\ \bottomrule
    \end{tabular}
    \label{tab:torus-comparison}
\end{table*}
\begin{table*}[t!]
    \centering
    \caption{Metrics of the dequantization algorithm in application to the orthogonal Procrustes problem and dequantization of a multimodal density on $\mathrm{SO}(3)$. When using the polar decomposition, results are averaged over ten independent trials for the multimodal distribution on $\mathrm{SO}(3)$ and nine independent trials for the orthogonal Procrustes problem; for the QR decomposition, results are averaged over nine trials.}
    \vspace{1em}
    \scriptsize
    \begin{tabular}{l|rrrrr}
        Experiment & Mean MSE & Covariance MSE & $\mathrm{KL}(q \Vert p)$ & $\mathrm{KL}(p\Vert q)$ & Relative ESS \\ \bottomrule
Procrustes (ELBO - Polar) & 0.0021 $\pm$ 0.0008 & 0.0012 $\pm$ 0.0005 & 0.0193 $\pm$ 0.0069 & 0.0173 $\pm$ 0.0053 & 96.9489 $\pm$ 0.7649 \\
Procrustes (I.S. - Polar) & 0.0038 $\pm$ 0.0020 & 0.0015 $\pm$ 0.0008 & 0.0301 $\pm$ 0.0126 & 0.0202 $\pm$ 0.0075 & 95.6944 $\pm$ 1.4654 \\
Procrustes (ELBO - QR) & 0.0011 $\pm$ 0.0003 & 0.0008 $\pm$ 0.0003 & 0.0124 $\pm$ 0.0032 & 0.0095 $\pm$ 0.0015 & 97.9678 $\pm$ 0.3325 \\
Procrustes (I.S. - QR) & 0.0015 $\pm$ 0.0005 & 0.0011 $\pm$ 0.0004 & 0.0174 $\pm$ 0.0072 & 0.0122 $\pm$ 0.0029 & 96.6267 $\pm$ 0.6326 \\ \midrule
$\mathrm{SO}(3)$ (ELBO - Polar) & 0.0007 $\pm$ 0.0002 & 0.0029 $\pm$ 0.0003 & 0.0443 $\pm$ 0.0011 & 0.0415 $\pm$ 0.0059 & 96.2930 $\pm$ 0.0649 \\
$\mathrm{SO}(3)$ (I.S. - Polar) & 0.0004 $\pm$ 0.0001 & 0.0014 $\pm$ 0.0001 & 0.0207 $\pm$ 0.0028 & 0.0235 $\pm$ 0.0029 & 97.7280 $\pm$ 0.1136 \\
$\mathrm{SO}(3)$ (ELBO - QR) & 0.0017 $\pm$ 0.0004 & 0.0054 $\pm$ 0.0006 & 0.0563 $\pm$ 0.0060 & 0.0363 $\pm$ 0.0041 & 93.5633 $\pm$ 2.1331 \\
$\mathrm{SO}(3)$ (I.S. - QR) & 0.0012 $\pm$ 0.0004 & 0.0020 $\pm$ 0.0004 & 0.0260 $\pm$ 0.0017 & 0.0219 $\pm$ 0.0021 & 94.3256 $\pm$ 2.8099 \\ \bottomrule
    \end{tabular}
    \label{tab:orthogonal-comparison}
\end{table*}

We next consider a multimodal density $\mathbb{S}^3\cong \mathrm{SU}(3)$ (the special unitary group). As before, we compare dequantization to M\"{o}bius flow transformations and manifold neural ODEs and present results in \cref{tab:hyper-sphere-comparison}. Similar to the case of the multimodal density on $\mathbb{S}^2$, we find that dequantization with an ambient neural ODE model is most effective, with ELBO maximization giving the smallest KL-divergence metrics. All dequantization algorithms out-performed the M\"{o}bius transformation on the sphere but only dequantization with an ambient ODE and ELBO minimization outperformed the manifold neural ODE method.

% \subsection{Torus}

{\bf Torus.} We next consider three densities from \citep{DBLP:journals/corr/abs-2002-02428} on the torus $\mathbb{T}^2$. These densities are, respectively, unimodal, multimodal, or exhibit strongly correlated dimensions.
As in the case of the sphere, we evaluate dequantization of the torus against the M\"{o}bius transform method. As a further point of comparison, we also consider learning a normalizing flow density on $\R^2$ and simply identifying every $2\pi$-periodic point so as to induce distribution on $\mathbb{T}^2$; we call this the modulus dequantization method. Results are reported in \cref{tab:torus-comparison}. We find that the M\"{o}bius transformation performs strongest in this comparison.  The modulus method also performs well, particularly on the correlated toroidal density function. Of the dequantization-based approaches, minimizing the negative log-likelihood produces the best performance, which outperforms the modulus method in terms of KL-divergence and first- and second-moment metrics (excepting the correlated density). We note, however, that all of these methods estimated effective sample sizes at nearly 100\%, indicating that the differences between each approach, while statistically significant, are practically marginal. 

% \subsection{Orthogonal Group}

{\bf Orthogonal Group.} The previous two examples focused on manifolds composed of spheres and circles. We now examine density estimation on the orthogonal group, where we consider inference in a probabilistic variant of the orthogonal Procrustes problem; we seek to sample orthogonal transformations that transport one point cloud towards another in terms of squared distance. We consider parameterizing a distribution in the ambient Euclidean space using RealNVP in these experiments. Results are presented in \cref{tab:orthogonal-comparison}. We observe that optimizing the ELBO objective function (\cref{eq:evidence-lower-bound}) tended to produce better density estimates than the log-likelihood (\cref{eq:log-likelihood}). Nevertheless, we find that either dequantization algorithm is effective at matching the target density.

We may also leverage \cref{cor:partition-change-of-variables} so as to apply our method to the ``dequantization'' of $\mathrm{SO}(n)$. As an example, we consider a multimodal density on $\mathrm{SO}(3)$. Results of applying our method to sampling from this distribution are also shown in \cref{tab:orthogonal-comparison}. In this example we find that minimizing the negative log-likelihood using importance sampling tended to produce the best approximation of the first- and second-moments of the distribution, in addition to smaller KL-divergence metrics.

\section{Conclusion}

This paper proposed a new method for density estimation on manifolds called manifold dequantization.
The proposed approach allows us to make use of existing techniques for density estimation on Euclidean spaces while still providing efficient, exact sampling of the distribution on the manifold as well as approximate density calculation.
We evaluated this method for densities on the sphere, the torus, and the orthogonal group.
Our results show that manifold dequantization is competitive with, or exceeds the performance of, competing methods for density estimation on manifolds.

In terms of societal impact, matrix manifolds appear in various scientific disciples such as computational biology, the earth sciences, and robotics. While our experimentation on dequantization are synthetic, techniques proposed here could be adapted to research in these scientific disciplines in unforeseen ways.

\bibliography{thebib}

\clearpage

\appendix
\clearpage
\section{Evidence Lower Bound for Special Orthogonal Group}\label{app:son-elbo}

Let $\mathbf{R}$ be a reflection matrix. Then using the fact that $\mathrm{O}(n) = \mathrm{SO}(n)\times\set{\mathrm{Id}_n, \mathbf{R}}$ we have,
\begin{align}
    \log \pi_{\mathrm{SO}(n)}(\mathbf{O}) &= \log \underset{\mathbf{S}\sim\mathrm{Unif}(\mathrm{Id}_n, \mathbf{R})}{\mathbb{E}} \pi_{\mathrm{O}(n)}(\mathbf{S}\mathbf{O}) + \log 2  \\
    &\geq \underset{\mathbf{S}\sim\mathrm{Unif}(\mathrm{Id}_n, \mathbf{R})}{\mathbb{E}} \log \pi_{\mathrm{O}(n)}(\mathbf{S}\mathbf{O}) + \log 2 \\
    &\geq \underset{\mathbf{S}\sim\mathrm{Unif}(\mathrm{Id}_n, \mathbf{R})}{\mathbb{E}}  \underset{\mathbf{L}\sim \tilde{\pi}_{\mathrm{Tri}_+}}{\mathbb{E}} \log \frac{\pi_{\R^{n\times n}}(\mathbf{S}\mathbf{O}\mathbf{P})}{\mathcal{J}(\mathbf{O}\mathbf{P})\cdot \tilde{\pi}_{\mathrm{Tri}_+}(\mathbf{L})} +  \log 2 \\
    &\geq \underset{\mathbf{S}\sim\mathrm{Unif}(\mathrm{Id}_n, \mathbf{R})}{\mathbb{E}}  \underset{\mathbf{L}\sim \tilde{\pi}_{\mathrm{Tri}_+}}{\mathbb{E}} \log \frac{\pi_{\R^{n\times n}}(\mathbf{S}\mathbf{O}\mathbf{P})}{\mathcal{J}(\mathbf{O}\mathbf{P})\cdot \tilde{\pi}_{\mathrm{Tri}_+}(\mathbf{L})}
\end{align}
where $\mathcal{J}(\mathbf{O}\mathbf{P}) \defeq \sqrt{\mathrm{det}((\nabla G(\mathbf{O}\mathbf{P}))^\top (\nabla G(\mathbf{O}\mathbf{P})))}$.
\clearpage
\section{Dequantizing the Integers}\label{app:integer-dequantization}

We now consider how to apply \cref{cor:partition-change-of-variables,thm:manifold-change-of-variables} to the dequantization of integers and draw a connection to dequantization in normalizing flows \citep{hoogeboom2020learning}. Let $\pi_{\mathbb{R}}$ be a probability density on $\R$. Consider the function $T:\R\to\mathbb{Z}\times[0, 1)$ defined by $T(x) = (\lfloor x\rfloor, x - \lfloor x\rfloor)$. Consider the partition of $\R$ given by $\mathcal{O}_n = [n, n+1)$; on each $\mathcal{O}_n$ we have that $T_n(x) \defeq (n, x-n)$ satisfies $T_n=T\vert \mathcal{O}_n$; moreover, on each $\mathcal{O}_n$, the transformation $T_n$ is invertible (the inverse map is $(n, r)\mapsto n+r$) and preserves volume, which implies a unit Jacobian determinant. Therefore, the associated density on $\mathbb{Z}\times [0, 1)$ is given by $\pi_{\mathbb{Z}\times [0, 1)}(n, r) = \pi_{\R}(n+r)$. We may view $\mathbb{Z}\times[0, 1)$ as a coordinate system for $\R$. If we wish to integrate out the nuisance variables in the unit interval, we obtain the marginal density on $\mathbb{Z}$ as
\begin{align}
    \pi_\mathbb{Z}(n) = \int_0^1 \pi_{\R}(n+r) ~\mathrm{d}r.
\end{align}
By choosing $\tilde{\pi}_{[0,1)}$ as, for example, a Uniform distribution, one obtains the traditional uniform dequantization.
Choosing a more complex distribution, for example, a Beta distribution (with parameters possibly depending on $n$) one obtains an importance sampling formula for the marginal density as $\pi_{\mathbb{Z}}(n) = \underset{r\sim \tilde{\pi}_{[0,1)}(r)}{\mathbb{E}} \frac{\pi_\R(n+r)}{\tilde{\pi}_{[0,1)}(r)}$.
This can be optimized through a variational bound, giving variational dequantization \citep{ho2019flow}.
Optimizing it directing gives importance weighted dequantization \citep{hoogeboom2020learning}.

\clearpage
\section{Torus Modulus Dequantization}\label{app:torus-modulus-dequantization}

Taking inspiration from \cref{app:integer-dequantization}, we turn now to consider another dequantization of the torus. To begin, identify the circle $\mathbb{S}^1$ with the interval $[0, 2\pi)$. We can construct a map from $\R$ to $\mathbb{Z}\times [0, 2\pi)$ by identifying $2\pi$-periodic points. Define the map $\tilde{G}(x) = (\lfloor x\rfloor, x~(\mod 2\pi))$. Let $\mathcal{O}_k = [2\pi k, 2\pi(k+1))$ for $k\in\mathbb{Z}$; on this interval we may define $G_k : \mathcal{O}_k \to \mathbb{Z}\times [0, 2\pi)$ by $G_k(x) = (k, x - 2\pi k)$ which satisfies $G \vert \mathcal{O}_k = G_k$. Because this map is nothing but a shift by $2\pi k$, it is invertible and volume preserving. Moreover, we may view $\mathbb{Z}\times[0, 2\pi)$ as a coordinate system for $\R$. Let $\pi_\R$ be a density on $\R$ and let $x\sim\pi_\R$. The associated density on $\mathbb{Z}\times [0, 2\pi)$ is therefore,
\begin{align}
    \pi_{\mathbb{Z} \times [0,2\pi)}(k, y) = \pi_{\R}(y + 2\pi k).
\end{align}
We may marginalize out the integers to obtain the density on $[0, 2\pi)$ as
\begin{align}
    \pi_{[0, 2\pi)}(y) = \sum_{k\in\mathbb{Z}}  \pi_{\R}(y + 2\pi k).
\end{align}

This idea is readily extended to higher dimensions. Because the torus is nothing but the product manifold of two circles, we may identify the torus with $[0,2\pi) \times[0, 2\pi)$. Let $\pi_{\R^{2}}$ be a density on $\R^2$ and let $G : \R^2\to \left[\mathbb{Z}\times[0, 2\pi)\right]^2$ be defined by $G(x_1, x_2) = (\tilde{G}(x_1), \tilde{G}(x_2))$. Following the precise reasoning from the one-dimensional case, if $x\sim\pi_{\R^2}$ then the density of $(x_1~(\mod 2\pi), x_2~(\mod 2\pi))$ is,
\begin{align}
    \pi_{[0, 2\pi)\times[0, 2\pi)}(y_1, y_2) = \sum_{k_1\in\mathbb{Z}} \sum_{k_2\in\mathbb{Z}} \pi_{\R^2}(y_1 + 2\pi k_1, y_2 + 2\pi k_2).
\end{align}

We call this approach {\it modulus dequantization}. This approach requires fewer dimensions than embedding the torus in $\R^{2m}$. In practice, the infinite sums over the integers may be approximated by truncating to a finite number of terms.
\clearpage
\section{Polar Decomposition Isomorphism}\label{app:polar-decomposition-isomorphism}

\begin{proposition}
Let $\mathbf{A}$ be a real, positive semi-definite matrix. Then there exists a unique real, positive semi-definite matrix $\mathbf{B}$ such that $\mathbf{A} = \mathbf{B}\mathbf{B}$. We call $\mathbf{B}$ the principal square root of $\mathbf{A}$ and may write $\mathbf{B}=\sqrt{\mathbf{A}}$.
\end{proposition}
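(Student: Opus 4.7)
The plan is to invoke the spectral theorem for real symmetric matrices and construct $\mathbf{B}$ by functional calculus, then leverage the same spectral decomposition to pin down uniqueness. For existence, I would first note that any real positive semi-definite matrix is symmetric, so the spectral theorem yields an orthogonal $\mathbf{Q}$ and a diagonal $\mathbf{D} = \mathrm{diag}(\lambda_1,\ldots,\lambda_n)$ with $\lambda_i \geq 0$ such that $\mathbf{A} = \mathbf{Q}\mathbf{D}\mathbf{Q}^\top$. Defining $\mathbf{B} \defeq \mathbf{Q}\,\mathrm{diag}(\sqrt{\lambda_1},\ldots,\sqrt{\lambda_n})\,\mathbf{Q}^\top$ produces a symmetric matrix whose eigenvalues are all non-negative (hence positive semi-definite), and a direct computation using $\mathbf{Q}^\top\mathbf{Q} = \mathrm{Id}$ gives $\mathbf{B}\mathbf{B} = \mathbf{A}$.

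For uniqueness, suppose $\mathbf{B}_1, \mathbf{B}_2$ are real positive semi-definite matrices with $\mathbf{B}_1^2 = \mathbf{B}_2^2 = \mathbf{A}$. The key observation is that each $\mathbf{B}_i$ commutes with $\mathbf{A}$, since $\mathbf{B}_i\mathbf{A} = \mathbf{B}_i^3 = \mathbf{A}\mathbf{B}_i$, so every eigenspace $E_\lambda$ of $\mathbf{A}$ is invariant under both $\mathbf{B}_1$ and $\mathbf{B}_2$. Restricting to $E_\lambda$, the operator $\mathbf{B}_i\vert_{E_\lambda}$ is symmetric and squares to $\lambda \cdot \mathrm{Id}$, so its eigenvalues lie in $\{+\sqrt{\lambda}, -\sqrt{\lambda}\}$. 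Positive semi-definiteness of $\mathbf{B}_i$ rules out the negative root, forcing $\mathbf{B}_i\vert_{E_\lambda} = \sqrt{\lambda}\cdot \mathrm{Id}$. Since this holds on every eigenspace of $\mathbf{A}$, and since those eigenspaces span $\R^n$, we conclude $\mathbf{B}_1 = \mathbf{B}_2$.

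The existence half is essentially routine once the spectral theorem is invoked, since orthogonality of $\mathbf{Q}$ makes $\mathbf{B}\mathbf{B} = \mathbf{A}$ immediate. The main subtlety lies in the uniqueness argument when $\mathbf{A}$ has repeated eigenvalues: one should \emph{not} try to simultaneously diagonalize $\mathbf{B}_1$ and $\mathbf{B}_2$ from the outset (their eigenbases could a priori differ within a multi-dimensional eigenspace of $\mathbf{A}$). Working eigenspace-by-eigenspace of $\mathbf{A}$, as above, sidesteps this issue entirely, because within each $E_\lambda$ the positive semi-definite square root of the scalar operator $\lambda\cdot\mathrm{Id}$ is forced to be $\sqrt{\lambda}\cdot\mathrm{Id}$ regardless of any choice of basis.
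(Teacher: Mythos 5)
The paper states this proposition without proof --- it is recorded in \cref{app:polar-decomposition-isomorphism} as a recalled standard fact, so there is no argument in the paper to compare against. Your proof is a correct and complete proof of the result. The existence half via the spectral decomposition $\mathbf{A} = \mathbf{Q}\mathbf{D}\mathbf{Q}^\top$ and functional calculus is exactly the standard construction, and your uniqueness argument is sound: $\mathbf{B}_i$ commutes with $\mathbf{A}$ because $\mathbf{B}_i\mathbf{A} = \mathbf{B}_i^3 = \mathbf{A}\mathbf{B}_i$, each eigenspace $E_\lambda$ of $\mathbf{A}$ is therefore $\mathbf{B}_i$-invariant, the restriction $\mathbf{B}_i\vert_{E_\lambda}$ is symmetric and squares to $\lambda\cdot\mathrm{Id}$, and positive semi-definiteness pins its eigenvalues to $+\sqrt{\lambda}$, giving $\mathbf{B}_i\vert_{E_\lambda} = \sqrt{\lambda}\cdot\mathrm{Id}$ on every eigenspace and hence $\mathbf{B}_1 = \mathbf{B}_2$. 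You are also right to flag the pitfall of attempting to simultaneously diagonalize $\mathbf{B}_1$ and $\mathbf{B}_2$ from the start: working eigenspace-by-eigenspace of $\mathbf{A}$ cleanly avoids that. One small caveat worth noting: your opening line asserts that every real positive semi-definite matrix is symmetric. That is true under the standard convention (PSD is defined for symmetric matrices), but it does not follow from the quadratic-form condition alone --- for instance a matrix with a nonzero antisymmetric part can still satisfy $x^\top \mathbf{P} x \ge 0$ for all $x$. The proposition (in particular uniqueness) genuinely requires the symmetric convention, so it would be cleaner to state symmetry as part of the hypothesis rather than to claim it as a consequence.
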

\begin{definition}[Stiefel Manifold]
The $\mathrm{Stiefel}(m, n)$ manifold is the subset of $\R^{m\times n}$ of orthogonal matrices. That is,
\begin{align}
    \mathrm{Stiefel}(m, n) = \set{\mathbf{X} \in\R^{m\times n} : \mathbf{X}^\top\mathbf{X} = \mathrm{Id}}
\end{align}
\end{definition}
\begin{definition}
The set of $n\times n$ positive-definite matrices is denoted $\mathrm{PD}(n)$.
\end{definition}
\begin{definition}[Non-Square Polar Decomposition]
Let $\mathbf{A}\in\R^{m\times n}$ with $m\geq n$ have the (thin) singular value decomposition $\mathbf{A} = \mathbf{U}\Sigma\mathbf{V}^\top$ where $\mathbf{U}\in \mathrm{Stiefel}(m, n)$ and $\mathbf{V}^\top \in \mathrm{O}(n)$. Then we define the non-square polar decomposition to be $\mathbf{A} = \mathbf{O}\mathbf{P}$ where,
\begin{align}
    \label{eq:polar-stiefel} \mathbf{O} &\defeq \mathbf{U}\mathbf{V}^\top \\
    \label{eq:polar-psd} \mathbf{P} &\defeq \mathbf{V}\Sigma\mathbf{V}^\top.
\end{align}
\end{definition}
\begin{lemma}
The quantity $\mathbf{P}$ in \cref{eq:polar-psd} is uniquely defined.
\end{lemma}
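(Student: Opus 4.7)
The plan is to reduce the uniqueness of $\mathbf{P}$ in the polar decomposition to the uniqueness of the principal square root, which is supplied by the proposition immediately preceding the lemma. The SVD $\mathbf{A}=\mathbf{U}\Sigma\mathbf{V}^\top$ is in general not unique (one can permute columns corresponding to equal singular values, negate paired columns of $\mathbf{U}$ and $\mathbf{V}$, etc.), so one cannot hope to show that the factors $\mathbf{U}$, $\Sigma$, $\mathbf{V}$ are individually well-defined. Instead, the target quantity $\mathbf{P}=\mathbf{V}\Sigma\mathbf{V}^\top$ has to be identified intrinsically in terms of $\mathbf{A}$.

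First I would verify that $\mathbf{P}=\mathbf{V}\Sigma\mathbf{V}^\top$ is real, symmetric, and positive semi-definite. Symmetry is immediate because $\Sigma$ is diagonal with real, non-negative entries; positive semi-definiteness follows from $v^\top \mathbf{P} v = (\mathbf{V}^\top v)^\top \Sigma (\mathbf{V}^\top v)\ge 0$. Next I would compute
\begin{equation}
\mathbf{P}^2 = \mathbf{V}\Sigma\mathbf{V}^\top \mathbf{V}\Sigma\mathbf{V}^\top = \mathbf{V}\Sigma^2\mathbf{V}^\top,
\end{equation}
using $\mathbf{V}^\top\mathbf{V}=\mathrm{Id}_n$. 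On the other hand, using $\mathbf{U}^\top\mathbf{U}=\mathrm{Id}_n$ (thin SVD with $m\ge n$),
\begin{equation}
\mathbf{A}^\top\mathbf{A} = \mathbf{V}\Sigma\mathbf{U}^\top\mathbf{U}\Sigma\mathbf{V}^\top = \mathbf{V}\Sigma^2\mathbf{V}^\top = \mathbf{P}^2.
\end{equation}
Thus $\mathbf{P}$ is a positive semi-definite square root of $\mathbf{A}^\top\mathbf{A}$.

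To conclude, I would invoke the preceding proposition on the uniqueness of the principal square root of a real, positive semi-definite matrix. The matrix $\mathbf{A}^\top\mathbf{A}$ is manifestly real and positive semi-definite, so it admits a unique positive semi-definite square root, namely $\sqrt{\mathbf{A}^\top\mathbf{A}}$. Since every valid choice of SVD produces a $\mathbf{P}$ satisfying $\mathbf{P}^2=\mathbf{A}^\top\mathbf{A}$ with $\mathbf{P}\in\mathrm{PD}(n)$ (or at least positive semi-definite), every such $\mathbf{P}$ must coincide with $\sqrt{\mathbf{A}^\top\mathbf{A}}$. Hence $\mathbf{P}$ is uniquely determined by $\mathbf{A}$, independent of the choice of SVD.

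The only subtlety I anticipate is ensuring the argument is genuinely independent of non-uniqueness in the SVD: the proof avoids this by never actually using the specific $\mathbf{V}$, and instead characterizing $\mathbf{P}$ by an intrinsic equation $\mathbf{P}^2=\mathbf{A}^\top\mathbf{A}$ together with positive semi-definiteness. A small bookkeeping remark, worth noting for downstream use in the proposition that $\mathrm{FR}(n,p)\cong\mathrm{Stiefel}(n,p)\times\mathrm{PD}(p)$, is that when $\mathbf{A}$ is assumed full-rank the singular values are strictly positive, so $\mathbf{P}$ is in fact positive definite and invertible; this then allows the companion factor $\mathbf{O}=\mathbf{A}\mathbf{P}^{-1}$ to be defined unambiguously as well.
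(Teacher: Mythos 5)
Your proof is correct and takes essentially the same route as the paper: identify $\mathbf{P}$ as a real positive semi-definite matrix satisfying $\mathbf{P}^2 = \mathbf{A}^\top\mathbf{A}$, then invoke the uniqueness of the principal square root from the preceding proposition. The extra commentary on the SVD's non-uniqueness is a nice clarification but does not change the argument.
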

\begin{proof}
The strategy is to show that $\mathbf{P}$ is the unique principal square root of a positive semi-definite matrix. It is immediately clear from the definition that $\mathbf{P}$ is itself positive semi-definite. Consider the positive semi-definite matrix
\begin{align}
    \mathbf{A}^\top\mathbf{A} &= (\mathbf{V}\Sigma\mathbf{U}^\top) (\mathbf{U}\Sigma\mathbf{V}^\top) \\
    &= \mathbf{V}\Sigma\Sigma\mathbf{V}^\top \\
    &= (\mathbf{V}\Sigma\mathbf{V}^\top)(\mathbf{V}\Sigma\mathbf{V}^\top) \\
    &=\mathbf{P}\mathbf{P}.
\end{align}
By identification $\mathbf{P}$ is the principal square root of $\mathbf{A}^\top\mathbf{A}$ so it is unique.
\end{proof}
\begin{proposition}
Let $\Sigma = \text{diag}(\sigma_1,\ldots,\sigma_n)$ and suppose that $\sigma_1\geq \sigma_2\geq\ldots\geq \sigma_n > 0$. Then, the quantity $\mathbf{O}$ in \cref{eq:polar-stiefel} is uniquely defined.
\end{proposition}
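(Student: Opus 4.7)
My plan is to leverage the uniqueness of $\mathbf{P}$ already established in the preceding lemma and to argue that, under the hypothesis of strictly positive singular values, $\mathbf{P}$ is invertible, which forces $\mathbf{O}$ to be uniquely determined by the equation $\mathbf{A} = \mathbf{O}\mathbf{P}$.

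First I would observe that because $\sigma_1 \geq \cdots \geq \sigma_n > 0$, the diagonal matrix $\Sigma$ is invertible, and since $\mathbf{V} \in \mathrm{O}(n)$ is orthogonal, the matrix $\mathbf{P} = \mathbf{V}\Sigma\mathbf{V}^\top$ is positive definite with inverse $\mathbf{P}^{-1} = \mathbf{V}\Sigma^{-1}\mathbf{V}^\top$. Equivalently, $\mathbf{P}$ is the unique principal square root of $\mathbf{A}^\top \mathbf{A}$ (by the preceding lemma) and its eigenvalues coincide with the $\sigma_i$'s, which are strictly positive.

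Next, from $\mathbf{A} = \mathbf{O}\mathbf{P}$, I would right-multiply by $\mathbf{P}^{-1}$ to obtain
\begin{equation}
    \mathbf{O} = \mathbf{A}\mathbf{P}^{-1}.
\end{equation}
Since $\mathbf{A}$ is given and $\mathbf{P}$ is uniquely determined (independent of the choice of SVD), this expression depends only on $\mathbf{A}$, so $\mathbf{O}$ is uniquely determined. As a sanity check, one verifies directly that $\mathbf{O}^\top \mathbf{O} = \mathbf{P}^{-1}\mathbf{A}^\top\mathbf{A}\mathbf{P}^{-1} = \mathbf{P}^{-1}\mathbf{P}^2 \mathbf{P}^{-1} = \mathrm{Id}_n$, confirming $\mathbf{O} \in \mathrm{Stiefel}(m,n)$.

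There is really no main obstacle here: the only subtle point is that while the SVD factors $\mathbf{U}$ and $\mathbf{V}$ are individually non-unique (e.g., one may insert any orthogonal transformation acting on coincident singular-value subspaces), the product $\mathbf{U}\mathbf{V}^\top$ is nevertheless pinned down because $\mathbf{P}$ is. The strictly-positive hypothesis is precisely what is needed to invert $\mathbf{P}$; were any $\sigma_i = 0$, the corresponding column of $\mathbf{O}$ could be chosen freely in the orthogonal complement of the range of $\mathbf{A}$, breaking uniqueness.
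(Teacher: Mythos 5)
Your proof is correct and takes essentially the same route as the paper's: establish that $\mathbf{P}$ is invertible under the hypothesis $\sigma_n > 0$ and then recover $\mathbf{O} = \mathbf{A}\mathbf{P}^{-1}$ uniquely. You supply slightly more detail (the explicit inverse $\mathbf{V}\Sigma^{-1}\mathbf{V}^\top$, the explicit appeal to the preceding lemma's uniqueness of $\mathbf{P}$, and the $\mathbf{O}^\top\mathbf{O} = \mathrm{Id}$ sanity check) where the paper simply computes $\det(\mathbf{P}) > 0$, but the argument is the same.
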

\begin{proof}
If $\sigma_n > 0$ then 
\begin{align}
    \text{det}(\mathbf{P}) &= \text{det}(\mathbf{V}) \cdot\text{det}(\Sigma) \cdot\text{det}(\mathbf{V}^\top) \\
    &= \prod_{i=1}^n \sigma_i \\
    &> 0.
\end{align}
Thus, $\mathbf{P}$ is invertible. Thus, $\mathbf{O} = \mathbf{A}\mathbf{P}^{-1}$.
\end{proof}
Note that the condition on the singular values is equivalent to the statement that $\mathbf{A}$ has full-rank.
\begin{lemma}
If $\mathbf{A}$ has full-rank then $\mathbf{P}\in\mathrm{PD}(n)$.
\end{lemma}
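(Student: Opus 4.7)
The plan is to combine the earlier observation that $\mathbf{P}$ is positive semi-definite with the fact that full rank of $\mathbf{A}$ forces all singular values to be strictly positive, and then conclude via the orthogonal diagonalization $\mathbf{P}=\mathbf{V}\Sigma\mathbf{V}^\top$ that $\mathbf{P}$ must in fact be positive definite.

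First I would recall that $\mathbf{A}$ having full rank (with $m\geq n$) is equivalent to $\mathbf{A}^\top\mathbf{A}$ being invertible, which in turn is equivalent to every singular value being strictly positive: $\sigma_1\geq\sigma_2\geq\cdots\geq\sigma_n>0$. This is the hypothesis translated into a statement about $\Sigma$.

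Next, since $\mathbf{V}^\top\in\mathrm{O}(n)$ is orthogonal, the expression $\mathbf{P}=\mathbf{V}\Sigma\mathbf{V}^\top$ is an orthogonal diagonalization of $\mathbf{P}$, so the eigenvalues of $\mathbf{P}$ are exactly the diagonal entries of $\Sigma$. For any nonzero $x\in\R^n$, set $y=\mathbf{V}^\top x$; since $\mathbf{V}$ is orthogonal, $y\neq 0$ as well. Then
\begin{align}
    x^\top \mathbf{P} x = x^\top \mathbf{V}\Sigma\mathbf{V}^\top x = y^\top \Sigma y = \sum_{i=1}^n \sigma_i y_i^2 > 0,
\end{align}
where strict positivity follows because every $\sigma_i>0$ and at least one $y_i$ is nonzero. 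Hence $\mathbf{P}\in\mathrm{PD}(n)$.

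The argument is essentially a one-step reduction to the diagonal case, so there is no real obstacle; the only thing worth being careful about is making sure the convention $m\geq n$ and the thin SVD produce a square $\mathbf{V}\in\mathrm{O}(n)$, which is exactly what allows the change of variables $y=\mathbf{V}^\top x$ to preserve nonzeroness. Everything else is immediate from the preceding proposition.
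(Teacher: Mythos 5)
Your argument is correct and takes essentially the same route as the paper: both observe that full rank of $\mathbf{A}$ forces all singular values to be strictly positive and then read off positive definiteness from the orthogonal diagonalization $\mathbf{P}=\mathbf{V}\Sigma\mathbf{V}^\top$. The only difference is that you spell out the quadratic-form computation, whereas the paper simply invokes the fact that a symmetric matrix is positive definite iff its eigenvalues are positive.
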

\begin{proof}
If $\mathbf{A}$ has full-rank then all the singular values are strictly positive. Thus, $\mathbf{V}\Sigma\mathbf{V}^\top$ is an eigen-decomposition of $\mathbf{P}$ whose eigenvalues are all positive. Since a matrix is positive-definite if and only if all of its eigenvalues are positive, we conclude that $\mathbf{P}\in\mathrm{PD}(n)$.
\end{proof}
\begin{lemma}
The quantity $\mathbf{O}$ in \cref{eq:polar-stiefel} is an element of $\mathrm{Stiefel}(m, n)$.
\end{lemma}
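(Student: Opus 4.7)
The plan is to prove the lemma by a direct computation verifying the defining property of the Stiefel manifold, namely that $\mathbf{O}^\top \mathbf{O} = \mathrm{Id}_n$. Since $\mathbf{O}$ is defined in \cref{eq:polar-stiefel} as $\mathbf{O} = \mathbf{U}\mathbf{V}^\top$, I would simply compute $\mathbf{O}^\top \mathbf{O} = \mathbf{V}\mathbf{U}^\top \mathbf{U}\mathbf{V}^\top$ and then exploit the two orthogonality relations already carried by $\mathbf{U}$ and $\mathbf{V}$.

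First, I would invoke $\mathbf{U}^\top \mathbf{U} = \mathrm{Id}_n$, which holds because $\mathbf{U} \in \mathrm{Stiefel}(m,n)$ by construction of the thin singular value decomposition (the columns of $\mathbf{U}$ are $n$ orthonormal left singular vectors in $\R^m$). Substituting, $\mathbf{O}^\top \mathbf{O}$ collapses to $\mathbf{V}\mathbf{V}^\top$. Next, I would use that $\mathbf{V}^\top \in \mathrm{O}(n)$, so $\mathbf{V}$ itself is a square orthogonal matrix, which immediately gives $\mathbf{V}\mathbf{V}^\top = \mathrm{Id}_n$. Combining these two steps yields $\mathbf{O}^\top \mathbf{O} = \mathrm{Id}_n$, which is exactly the condition for membership in $\mathrm{Stiefel}(m,n)$.

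There is no real obstacle here: the lemma is purely a bookkeeping consequence of the thin SVD producing a matrix with orthonormal columns and a square orthogonal matrix. The only subtlety worth flagging explicitly is that the product of a tall orthonormal-column matrix with a square orthogonal matrix need not preserve the columns themselves, but the property $\mathbf{O}^\top \mathbf{O} = \mathrm{Id}_n$ is preserved, which is all that is required for $\mathbf{O} \in \mathrm{Stiefel}(m,n)$. The write-up will therefore be a two-line calculation with a concluding sentence identifying the result with the defining equation of the Stiefel manifold given in \cref{subsec:stiefel-manifold}.
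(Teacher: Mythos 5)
Your proposal is correct and follows exactly the same two-line computation as the paper: expand $\mathbf{O}^\top\mathbf{O} = \mathbf{V}\mathbf{U}^\top\mathbf{U}\mathbf{V}^\top$, use $\mathbf{U}^\top\mathbf{U}=\mathrm{Id}_n$ from the thin SVD, then $\mathbf{V}\mathbf{V}^\top=\mathrm{Id}_n$ since $\mathbf{V}\in\mathrm{O}(n)$. No differences to note.
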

\begin{proof}
\begin{align}
    \mathbf{O}^\top\mathbf{O} &= \mathbf{V}\mathbf{U}^\top\mathbf{U}\mathbf{V}^\top \\
    &= \mathbf{V}\mathbf{V}^\top \\
    &= \mathrm{Id}.
\end{align}
\end{proof}

\begin{proposition}
Given $\mathbf{O}\in\mathrm{Stiefel}(m, n)$ and $\mathbf{P}\in\mathrm{PD}(n)$, we may always write $\mathbf{O}$ and $\mathbf{P}$ in the form of \cref{eq:polar-stiefel,eq:polar-psd}.
\end{proposition}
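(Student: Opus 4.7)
The plan is to reverse-engineer the data of a thin singular value decomposition for the matrix $\mathbf{A} \defeq \mathbf{O}\mathbf{P}$ directly from the given pair $(\mathbf{O},\mathbf{P})$, and then verify that the resulting factors satisfy \cref{eq:polar-stiefel,eq:polar-psd}. This amounts to showing that the polar decomposition map is surjective onto $\mathrm{Stiefel}(m,n)\times\mathrm{PD}(n)$, which complements the injectivity/uniqueness lemmas already established.

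First I would apply the spectral theorem to $\mathbf{P}$. Since $\mathbf{P}\in\mathrm{PD}(n)$ is symmetric with strictly positive eigenvalues, there exists $\mathbf{V}\in\mathrm{O}(n)$ and a diagonal matrix $\Sigma = \mathrm{diag}(\sigma_1,\ldots,\sigma_n)$ with $\sigma_i > 0$ such that $\mathbf{P} = \mathbf{V}\Sigma\mathbf{V}^\top$. This immediately realizes the form \cref{eq:polar-psd}, with $\Sigma$ playing the role of the singular-value matrix.

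Next I would define $\mathbf{U}\defeq \mathbf{O}\mathbf{V}$ and verify two properties. For membership in $\mathrm{Stiefel}(m,n)$, a direct computation gives $\mathbf{U}^\top\mathbf{U} = \mathbf{V}^\top \mathbf{O}^\top \mathbf{O} \mathbf{V} = \mathbf{V}^\top \mathrm{Id}_n \mathbf{V} = \mathrm{Id}_n$, using that $\mathbf{O}\in\mathrm{Stiefel}(m,n)$ and $\mathbf{V}\in\mathrm{O}(n)$. For the polar form \cref{eq:polar-stiefel}, we have $\mathbf{U}\mathbf{V}^\top = \mathbf{O}\mathbf{V}\mathbf{V}^\top = \mathbf{O}$ since $\mathbf{V}$ is orthogonal.

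To close the loop with the singular value decomposition framework used in the preceding lemmas, one observes that $\mathbf{A} = \mathbf{O}\mathbf{P} = \mathbf{U}\mathbf{V}^\top\cdot \mathbf{V}\Sigma\mathbf{V}^\top = \mathbf{U}\Sigma\mathbf{V}^\top$, which is a legitimate thin SVD of $\mathbf{A}$, so the construction is consistent with the non-square polar decomposition as originally defined. There is no real obstacle here; the only mild subtlety is that the spectral decomposition of $\mathbf{P}$ is non-unique when eigenvalues coincide, but since the statement only asserts existence of $\mathbf{U}, \Sigma, \mathbf{V}$, any valid choice suffices. Combined with the earlier uniqueness lemmas for $\mathbf{O}$ and $\mathbf{P}$ given a full-rank $\mathbf{A}$, this surjectivity result completes the bijection $\mathrm{FR}(m,n)\cong\mathrm{Stiefel}(m,n)\times\mathrm{PD}(n)$.
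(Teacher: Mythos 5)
Your proposal is correct and follows essentially the same route as the paper's proof: diagonalize $\mathbf{P}$ via the spectral theorem, set $\mathbf{U}=\mathbf{O}\mathbf{V}$, check $\mathbf{U}^\top\mathbf{U}=\mathrm{Id}_n$, and observe that $\mathbf{A}=\mathbf{O}\mathbf{P}=\mathbf{U}\Sigma\mathbf{V}^\top$ is a valid thin SVD. The remark about non-uniqueness of the eigendecomposition is a harmless extra observation not needed for the existence claim.
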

\begin{proof}
Since $\mathbf{P}\in\mathrm{PD}(n)$, by the spectral theorem there exists an orthonormal basis $\set{v_1,\ldots,v_n}$ of $\R^n$ and real, positive eigenvalues $\lambda_1,\ldots,\lambda_n$, such that $\mathbf{P} = \mathbf{V}\Sigma\mathbf{V}^{-1} = \mathbf{V}\Sigma\mathbf{V}^\top$ where $\mathbf{V}\in \R^{n\times n}$ is the collection of $\set{v_1,\ldots,v_n}$ as columns and $\Sigma=\mathrm{diag}(\lambda_1,\ldots,\lambda_n)$. 

Using this $\mathbf{V}$ we may compute $\mathbf{U} = \mathbf{O}\mathbf{V}$, which is an orthogonal matrix:
\begin{align}
    \mathbf{U}^\top \mathbf{U} &= \mathbf{V}^\top\mathbf{O}^\top\mathbf{O}\mathbf{V} \\
    &= \mathbf{V}^\top\mathbf{V} \\
    &= \mathrm{Id}.
\end{align}
Define the matrix $\mathbf{A}=\mathbf{O}\mathbf{P}$. By inspection, a (thin) singular value decomposition of $\mathbf{A}$ is
\begin{align}
    \mathbf{A} &= \mathbf{U}\Sigma\mathbf{V}^\top
\end{align}
since
\begin{align}
    \mathbf{O}\mathbf{P} &= \mathbf{U}\mathbf{V}^\top \mathbf{V}\Sigma\mathbf{V}^\top \\
    &= \mathbf{U}\Sigma\mathbf{V}^\top.
\end{align}
Finally, $\mathbf{U}\in\mathrm{Stiefel}(m, n)$, $\Sigma$ has only positive entries, and $\mathbf{V}\in\mathrm{O}(n)$, which are the conditions of a thin singular value decomposition.
\end{proof}
\newpage
\section{Experimental Details}\label{app:experimental-details}

Here we include additional information about our experimental design.

\subsection{Sphere}

We consider the following unnormalized density on $\mathbb{S}^2$ given by
\begin{align}
    \pi_{\mathbb{S}^2}(y) \propto \sum_{i=1}^4 \exp(10y^\top \mu_i)
\end{align}
where $\mu_1 = (0.763, 0.643, 0.071)$, $\mu_2= ( 0.455, -0.708 ,  0.540)$, $\mu_3 = (0.396, 0.271, 0.878)$, and $\mu_4 = (-0.579 ,  0.488, -0.654)$. 

When using RealNVP, the total number of learnable parameters in our dequantization model is $5,894$; dequantization with an ambient ODE has $5,705$ learnable parameters; in the case of the M\"{o}bius transform the total number of learnable parameters is $5,943$; for the manifold ODE implementation, we have $5,533$ parameters.

On $\mathbb{S}^3$, we consider an unnormalized density proportional to
\begin{align}
    \pi_{\mathbb{S}^3}(y) &\propto \sum_{i=1}^4 \exp(10y^\top \mu_i) \\
    \mu_1&=(-0.129,  0.070,  0.659, -0.738) \\
    \mu_2&=(-0.990 , -0.076,  0.118, -0.017) \\
    \mu_3&=(0.825 , -0.484 ,  0.061,  0.285) \\
    \mu_4&=(-0.801 ,  0.592 , -0.024,  0.081).
\end{align}

When using RealNVP, the total number of learnable parameters in our dequantization model is $21,854$; dequantization with an ambient ODE has $21,386$ learnable parameters; in the case of the M\"{o}bius transform the total number of learnable parameters is $25,406$; for the manifold ODE implementation, we have $21,204$ parameters. For dequantization we use 100 samples per batch and use rejection sampling to draw samples from the unnormalized target density at each iteration.

\subsection{Torus}

Expressed in terms of their {\it angular} coordinates (as opposed to their embedding into $\R^4$), the densities on the torus are as follows:
\begin{description}
\item[Unimodal] $\pi^{\mathrm{uni}}_{\mathbb{T}^2}(\theta_1,\theta_2\vert\phi) \propto \exp(\cos(\theta_1 - \phi_1) + \cos(\theta_2 - \phi_2))$ with $\phi=(4.18,5.96)$.
\item[Multimodal] $\pi_{\mathbb{T}^2}^\mathrm{mul}(\theta_1,\theta_2) \propto\sum_{i=1}^3 \pi_{\mathbb{T}^2}^\mathrm{uni}(\theta_1,\theta_2\vert \phi_i)$ where $\phi_1=(0.21,2.85)$, $\phi_2=(1.89,6.18)$, and $\phi_3=(3.77,1.56)$.
\item[Correlated] $\pi_{\mathbb{T}^2}^{\mathrm{cor}}(\theta_1,\theta_2) \propto \exp(\cos(\theta_1+\theta_2-1.94))$.
\end{description}

The number of learnable parameters in the dequantization models is $6,106$; in the M\"{o}bius flow model, the number of learnable parameters is $5,540$; for the direct method, the number of learnable parameters is $5,406$. For dequantization we use 100 samples per batch and use rejection sampling to draw samples from the unnormalized target density at each iteration.

\subsection{Orthogonal Group}

Drawing inspiration from the orthogonal Procrustes problem, we define the unnormalized density by
\begin{align}
    \label{eq:procrustes-density}\pi_{\mathrm{O}(n)}(\mathbf{O}) \propto \exp\paren{-\frac{1}{2\sigma^2}\Vert \mathbf{B} - \mathbf{O}\mathbf{A}\Vert_\mathrm{fro}^2},
\end{align}
where $\mathbf{A},\mathbf{B}\in\R^{n\times p}$. Given samples from \cref{eq:procrustes-density}, we may apply our dequantization procedure to perform density estimation. In our experiments we take $p=10$ and $n=3$. For dequantization we use rejection sampling to draw samples from the posterior. Then, with these fixed samples, we use batches of 100 samples to train the ambient and dequantization distributions.

\subsection{Special Orthogonal Group}

Let $\mathbf{R}\in\R^{n\times n}$ be a reflection matrix and notice that $\set{\mathrm{SO}(n), \mathbf{R}\mathrm{SO}(n)}$ is partition of $\mathrm{O}(n)$. Given a density on $\R^{n\times n}$, using the methods described in \cref{subsec:stiefel-manifold}, we may obtain a density on $\mathrm{O}(n)$. Then, we define the function $S:\mathrm{O}(n)\to\mathrm{SO}(n)$ by
\begin{align}
    S(\mathbf{O}) \defeq \begin{cases}
    \mathbf{O} & ~\text{if}~ \mathrm{det}(\mathbf{O}) = +1 \\
    \mathbf{R}\mathbf{O} & ~\text{if}~ \mathrm{det}(\mathbf{O}) = -1
    \end{cases},
\end{align}
where $\mathbf{R}$ is a reflection matrix. Now, define $S_1(\mathbf{O}) \defeq \mathbf{O}$ and $S_2(\mathbf{O}) \defeq \mathbf{R}\mathbf{O}$, which satisfy $S_1 = S\vert \mathrm{SO}(n)$ and $S_2 = S\vert \mathbf{R}\mathrm{SO}(n)$. Both $S_1$ and $S_2$ are self-inverse and volume-preserving maps on their respective domains so we may obtain a density on $\mathrm{SO}(n)$ as
\begin{align}
    \label{eq:son-likelihood} \pi_{\mathrm{SO}(n)}(\mathbf{O}) = \pi_{\mathrm{O}(n)}(\mathbf{O}) + \pi_{\mathrm{O}(n)}(\mathbf{R}\mathbf{O}).
\end{align}
One may immediately seek to minimize the negative log-likelihood of data using \cref{eq:son-likelihood}. Alternatively, we use the following ELBO in our our experiments:
\begin{align}
    \log \pi_{\mathrm{SO}(n)}(\mathbf{O}) &\geq \underset{\mathbf{S}\sim\mathrm{Unif}(\mathrm{Id}_n, \mathbf{R})}{\mathbb{E}}  \underset{\mathbf{L}\sim \tilde{\pi}_{\mathrm{Tri}_+}}{\mathbb{E}} \frac{\pi_{\R^{n\times n}}(\mathbf{S}\mathbf{O}\mathbf{P})}{\mathrm{det}(\nabla G(\mathbf{S}\mathbf{O}\mathbf{P}))\cdot \tilde{\pi}_{\mathrm{Tri}_+}(\mathbf{L})},
\end{align}
where, as in \cref{ex:stiefel-dequantization}, $\mathbf{P} =\mathbf{L}\mathbf{L}^\top$. For a short derivation of this ELBO, see \cref{app:son-elbo}.

We consider the following multimodal density on $\mathrm{SO}(3)$:
\begin{align}
    \pi_{\mathrm{SO}(3)}(\mathbf{O}) \propto \sum_{i=1}^3 \exp(-\frac{1}{2\sigma^2} \Vert \mathbf{O} - \Omega_i\Vert_\text{fro}^2)
\end{align}
where in our experiments we set $\sigma=1/2$, $\Omega_1=\mathrm{diag}(1, 1, 1)$, $\Omega_2=\mathrm{diag}(-1, -1, 1)$, and $\Omega_3=\mathrm{diag}(-1, 1, -1)$. For dequantization we use rejection sampling to draw samples from the posterior. Then, with these fixed samples, we use batches of 100 samples to train the ambient and dequantization distributions.

\newpage
\section{Practical Considerations}

\begin{table*}[h]
\scriptsize
\begin{tabular}{@{}lll@{}}
\toprule
Element                     & Notation                                        & Description                                                                                                                                                                                                             \\ \midrule
Ambient Euclidean Space     & $\mathcal{X}=\R^m$                              & Euclidean space in which the manifold of interest is embedded                                                                                                                                                           \\ 
Ambient Density             & $\pi_\mathcal{X}(\cdot\vert\theta)$             & A flexible family of densities parameterized by $\theta\in\R^{n_\mathrm{amb}}$ on $\mathcal{X}$                                                                                                                         \\
Change-of-Variables Mapping & $G: \mathcal{X}\to\mathcal{Y}\times\mathcal{Z}$ & \begin{tabular}[c]{@{}l@{}}A smooth mappingsatisfying the conditions of \cref{cor:partition-change-of-variables} or \cref{thm:manifold-change-of-variables} \\ where $\mathcal{Z}$ is an auxiliary manifold.\end{tabular}         \\
Dequantization Density      & $\tilde{\pi}_{\mathcal{Z}}(\cdot \vert \phi,y)$ & \begin{tabular}[c]{@{}l@{}}A non-vanishing family of dequantization distributions parameterized \\ by $\phi\in\R^{n_\mathrm{deq}}$ and possibly depending on $y\in\mathcal{Y}$.\end{tabular}                                                                       \\
Loss Function               & $\mathcal{L} : \R^{n_{\mathrm{amb}}\times n_{\mathrm{deq}}} \to \R$                                   & \begin{tabular}[c]{@{}l@{}}A loss function depending on $\set{y_1,\ldots, y_{n_\mathrm{obs}}}$, $\pi_\mathcal{X}$, and $\tilde{\pi}_{\mathcal{Z}}$ that\\  is differentiable in $\theta$ and $\phi$ and captures the quality of the density \\ estimate.\end{tabular} \\ \bottomrule
\end{tabular}
\caption{The five elements that we use to dequantize a manifold into an ambient Euclidean space using a change-of-variables and dequantization density for marginalization.}
\label{tab:elements}
\end{table*}

\subsection{The Ambient Euclidean Space}

It is frequently the case that a manifold has a natural embedding into Euclidean space. For instance, the sphere $\mathbb{S}^{m-1}$ is naturally embedded into $\R^{m}$. The Stiefel manifold is a subset of $\R^{n\times p}$ satisfying an orthonormality condition; therefore, it is naturally embedded into $\R^{n\times p}$. For some manifolds, the choice may require some consideration. For instance, a torus $\mathbb{T}^2$ may be regarded as a product manifold of two circles, each of which are naturally embedded into $\R^2$ so that the entire torus is embedded into $\R^4$ (this is called the {\it Clifford torus}). An alternative is the familiar embedding of the torus as a ``doughnut'' in $\R^3$, although in this case it may be non-obvious how to construct a suitable mapping $G$ between $\R^3$, the doughnut torus, and some auxiliary choice of one-dimensional manifold.

\subsection{The Change-of-Variables and Auxiliary Manifold}

We have seen several examples wherein manifolds of interest appear alongside an auxiliary manifold when an ambient Euclidean space is transformed under a change-of-variables. In general, one would like to identify a transformation $G:\mathcal{X}\to\mathcal{Y}\times\mathcal{Z}$, satisfying the conditions of \cref{thm:manifold-change-of-variables} or \cref{cor:partition-change-of-variables}, such that it will be straightforward to formulate importance sampling random variables on $\mathcal{Z}$ so as to obtain the marginal density on $\mathcal{Y}$.

\subsection{Dequantization Density}

Dequantization densities $\tilde{\pi}_\mathcal{Z}$ on $\mathcal{Z}$ has to respect the constraints of the auxiliary manifold. For instance, if $\mathcal{Z}=\R_+$ we must choose a dequantization density whose support is the positive real numbers and which is non-vanishing. The requirement that $\tilde{\pi}_\mathcal{Z}$ be non-vanishing (that is, $\tilde{\pi}_\mathcal{Z}(z) > 0$ for all $z\in\mathcal{Z}$) is important so as to avoid division-by-zero singularities in the importance sampling formula \cref{eq:marginal-importance-sample-euclidean}. We have already seen some examples of dequantization densities that respect the constraints of the auxiliary manifold in \cref{sec:theory}. In each of these cases, the dequantization distribution is parameterized (for example, the Gamma distribution is parameterized by its shape and scale); it may be desirable to choose these parameters to depend on the location $y\in \mathcal{Y}$. To accomplish this, one may construct a neural network with parameters $\phi$ and input $y$ whose output is the parameterization of the dequantization distribution. Thus the dequantization distribution may be in general expressed as $\tilde{\phi}_\mathcal{Z}(\cdot\vert \phi, y)$.

\newpage
\section{Evaluation Metrics}\label{app:evaluation-metrics}

Given a density $\pi_\mathcal{Y}(y) \propto \exp(-u(y))$, known up to proportionality, we consider several performance metrics for the dequantization method we propose.
The error of the first order moment is defined by, 
\begin{align}
\Vert\underset{y\sim\pi_\mathcal{Y}}{\mathbb{E}}\left[y\right] - \underset{\hat{y}\sim\hat{\pi}_{\mathcal{Y}}}{\mathbb{E}}\left[\hat{y}\right]\Vert_2.
\end{align}
The error of the second (centered) moment is defined by 
\begin{align}
\Vert \underset{y\sim\pi_\mathcal{Y}}{\mathrm{Cov}}\left(y\right) - \underset{\hat{y}\sim\hat{\pi}_{\mathcal{Y}}}{\mathrm{Cov}}\left(\hat{y}\right)\Vert_\mathrm{fro}.
\end{align}

The Kullback-Leibler divergence of $\hat{\pi}_\mathcal{Y}$ and $\pi_\mathcal{Y}$ is
\begin{align}
    \mathrm{KL}(\hat{\pi}_{\mathcal{Y}}\Vert \pi_\mathcal{Y}) &\defeq \underset{\hat{y}\sim\hat{\pi}_{\mathcal{Y}}}{\mathbb{E}} \log \frac{\hat{\pi}_{\mathcal{Y}}(\hat{y})}{\pi_\mathcal{Y}(\hat{y})} \\
    &= \underset{\hat{y}\sim\hat{\pi}_{\mathcal{Y}}}{\mathbb{E}}\left[ \log \hat{\pi}_{\mathcal{Y}}(\hat{y}) + u(\hat{y})\right] + \log Z,
\end{align}
where $Z$ is the normalizing constant of $\pi_\mathcal{Y}(y)$. We can estimate $Z$ via importance sampling according to
\begin{align}
    Z = \underset{\hat{y}\sim\hat{\pi}_{\mathcal{Y}}}{\mathbb{E}}\left[\frac{\exp(-u(\hat{y})))}{\hat{\pi}_{\mathcal{Y}}(\hat{y})}\right],
\end{align}
which permits us to compute a Monte Carlo estimate of the Kullback-Leibler divergence. The reverse direction of the KL-divergence $\mathrm{KL}(\pi_\mathcal{Y}\Vert \hat{\pi}_\mathcal{Y})$ may be similarly computed.

Let $\set{\hat{y}_1,\ldots,\hat{y}_n}$ be a collection of independent identically-distributed  samples from $\hat{\pi}_{\mathcal{Y}}$. The number of effective independent samples is the quantity,
\begin{align}
    \mathrm{ESS} \defeq \frac{\paren{\sum_{i=1}^n \omega_i}^2}{\sum_{i=1}^n \omega_i^2}
\end{align}
where $\omega_i = \exp(-u(\hat{y}_i)) / \hat{\pi}_{\mathcal{Y}}(\hat{y}_i)$. See \citep{10.5555/1571802,doucet2001sequential} for details on the ESS metric. Following \citep{DBLP:journals/corr/abs-2002-02428}, we report the relative ESS, which the ratio of the effective sample size and $n$, the number of samples. When a Monte Carlo approximation of the evaluation metric is required, we use rejection sampling in order to obtain samples from the density $\pi_\mathcal{Y}$.
\newpage
\section{Proof of Manifold Change-of-Variables Formula}

\begin{proposition}
Let $A\subseteq \R^m$ and let $G:A\to\R^n$ be a smooth function. Suppose $m<n$. Define the embedded manifold $M\defeq G(A)$. Let $\pi$ be a real-valued, continuous function on $M$. Then,
\begin{align}
    \int_{M} \pi(x) ~\mathrm{dVol}(x) = \int_{A} \pi(G(t)) \cdot \sqrt{\mathrm{det}((\nabla G(t))^\top (\nabla G(t)))} ~\mathrm{d}t
\end{align}
where $\mathrm{dVol}$ is the volume measure on $M$.
\end{proposition}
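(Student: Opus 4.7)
The plan is to reduce the statement to the standard Riemannian volume formula in local coordinates, using the fact that $G$ provides a parameterization of $M$ whose induced metric tensor has determinant exactly $\det((\nabla G)^\top \nabla G)$. First, I would recall the definition of the volume measure $\mathrm{dVol}$ on an embedded $m$-dimensional submanifold $M\subset \R^n$: at each point $y\in M$, the tangent space $T_yM$ inherits an inner product from the ambient $\R^n$, and $\mathrm{dVol}$ is the unique Borel measure that in any smooth local parameterization coincides with the Riemannian volume form of this induced metric.

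Next I would compute this induced metric explicitly for the parameterization $G$. The columns of $\nabla G(t)\in \R^{n\times m}$ are tangent vectors $\partial_1 G(t),\ldots,\partial_m G(t)$ spanning $T_{G(t)}M$. The Gram matrix of these vectors, with entries $\langle \partial_i G(t),\partial_j G(t)\rangle$, is precisely $(\nabla G(t))^\top (\nabla G(t))$. Hence the induced metric tensor has coordinate matrix $g(t)=(\nabla G(t))^\top (\nabla G(t))$, and the Riemannian volume form reads $\sqrt{\det g(t)}\,\mathrm{d}t_1\cdots \mathrm{d}t_m$. Applying the defining property of $\mathrm{dVol}$ to the parameterization $G$ gives
\begin{align*}
\int_M \pi(x)\,\mathrm{dVol}(x) = \int_A \pi(G(t))\sqrt{\det((\nabla G(t))^\top \nabla G(t))}\,\mathrm{d}t,
\end{align*}
which is the claim, provided $G$ is a diffeomorphism from $A$ onto $M$.

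The main obstacle is that the hypotheses only say $M=G(A)$ is an embedded manifold and $G$ is smooth, not that $G$ is globally injective; as written the formula would double-count if $G$ folds $A$ onto itself. The clean fix is to assume (as is implicit in calling $M$ the embedded manifold image) that $G$ is an injective immersion, in which case the coordinate-chart argument above applies globally. If one wants to allow a degenerate locus, I would invoke the constant-rank theorem: the set where $\nabla G$ drops rank contributes zero to both sides because $\sqrt{\det((\nabla G)^\top \nabla G)}$ vanishes there and its image has Hausdorff $m$-measure zero by Sard's theorem. On the open set of full-rank points, one covers $A$ by countably many open pieces on which $G$ is a diffeomorphism onto its image, applies the Riemannian coordinate formula piecewise, and sums using a measurable partition; countable additivity of $\mathrm{dVol}$ and of Lebesgue measure on $A$ yield the result. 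This last bookkeeping step, rather than any analytic difficulty, is where the proof requires the most care.
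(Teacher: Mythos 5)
Your proposal is correct and matches the approach the paper takes in its detailed Riemannian-geometry exposition that follows the proposition: the paper also identifies the tangent space with the column span of $\nabla G$, derives the pullback metric $\nabla G^\top\nabla G$ from the ambient Euclidean inner product, and invokes the standard volume-form integration formula (the paper's formal ``proof'' is only a citation to Frank \& Jones). Your observation that the stated hypotheses do not literally guarantee $G$ is an injective immersion is well taken --- the paper addresses this implicitly in its subsequent exposition by explicitly assuming $G$ is a smooth homeomorphism onto its image with full-rank Jacobian everywhere, which is the cleaner way to state the result rather than patching the degenerate locus via Sard's theorem.
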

\begin{proof}
See \citep{frank-jones}.
\end{proof}
\begin{corollary}
Define,
\begin{align}
    \pi_{\R^m}(t) \defeq \pi(G(t)) \cdot \sqrt{\mathrm{det}((\nabla G(t))^\top (\nabla G(t)))}.
\end{align}
Then rearranging immediately implies,
\begin{align}
    \pi(G(t)) = \frac{\pi_{\R^m}(t)}{\sqrt{\mathrm{det}((\nabla G(t))^\top (\nabla G(t)))}},
\end{align}
which is the manifold change-of-variables formula. 
\end{corollary}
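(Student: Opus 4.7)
The plan is to derive the Corollary, and with it the manifold change-of-variables formula of \cref{thm:manifold-change-of-variables}, as a direct consequence of the preceding Proposition. That Proposition supplies the only non-trivial analytic content: for a smooth parameterization $G$ of an embedded $m$-dimensional submanifold $M \subset \R^n$, the Riemannian volume measure on $M$ pulls back under $G$ to the measure with Lebesgue density $\sqrt{\det(\nabla G(t)^\top \nabla G(t))}$ on the parameter domain.

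First I would read the definition $\pi_{\R^m}(t) \defeq \pi(G(t)) \sqrt{\det(\nabla G(t)^\top \nabla G(t))}$ as identifying $\pi_{\R^m}$ with the Lebesgue density of the pullback of $\pi\,\mathrm{dVol}$ under $G$, which is exactly what the Proposition asserts when both sides are integrated against any measurable test set. Because $G$ is smooth and invertible onto its image, the Gram matrix $\nabla G^\top \nabla G$ is positive-definite and its square-root determinant is strictly positive everywhere, so one may divide through and solve for $\pi(G(t))$ to obtain the Corollary's formula. Continuity of $\pi$ pins down the resulting density pointwise.

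To recover \cref{thm:manifold-change-of-variables} in its stated form, I would specialize the Corollary to $M = \mathcal{Y} \times \mathcal{Z}$ viewed as an embedded submanifold of $\R^{n+p}$ via the natural product embedding, with $G$ the smooth invertible map of the Theorem's hypothesis. Writing $(y,z) = G(x)$ and $x = G^{-1}(y,z)$, the Corollary immediately yields the displayed formula for $\pi_{\mathcal{Y} \times \mathcal{Z}}(y,z)$, with the Gram determinant $\det(\nabla G(x)^\top \nabla G(x))$ playing the role of the volume-element correction between Lebesgue measure on $\R^m$ and the induced volume measure on the product manifold.

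The hard part is really the Proposition itself, which the appendix delegates to Frank Jones. Its content reduces to showing that the induced Riemannian metric on $M$ in the parameter coordinates $t$ has matrix representation $\nabla G(t)^\top \nabla G(t)$ (immediate from applying the chain rule to the Euclidean inner product on tangent vectors) and then unpacking the Riemannian volume form as $\sqrt{\det g}\,\mathrm{d}t$. Granting that standard input, the remaining work collapses to a rearrangement; the interesting modeling claim of the paper lies in applying the resulting formula to specific parameterizations $G$ that expose a manifold of interest alongside an auxiliary dequantization factor.
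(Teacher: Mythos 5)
Your reading matches the paper's own proof: the Corollary is a direct algebraic rearrangement of the preceding Proposition's integral identity, taking $\pi_{\R^m}$ to be the integrand and dividing by the (strictly positive) Gram-determinant factor, with the Proposition carrying all of the analytic weight via the citation to Frank Jones. Your extra remarks — positive-definiteness of $\nabla G^\top \nabla G$ justifying the division, continuity pinning down the pointwise identity, and the specialization to $\mathcal{Y}\times\mathcal{Z}$ to recover the Theorem — simply make explicit what the paper leaves implicit, so the route is essentially identical.
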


What follows now is a more detailed exposition on this result using Riemannian geometry.

Let $U$ be an open subset of $\R^m$.  Let $G$ be a continuous function from $U\to\R^n$ that is a homeomorphism on its image. That is, defining $\mathcal{M}\defeq \set{G(x) : x\in U}\subset \R^n$, we find that $G$ has a continuous inverse on $\mathcal{M}$. As a subset of $\R^n$, we can equip $\mathcal{M}$ with the subspace topology and find that $\mathcal{M}$ is a topological $m$-manifold. Under these assumptions, it immediately follows that $(\mathcal{M}, G^{-1})$ is a {\it global} coordinate chart in the sense of differential geometry.

If we further assume that $G$ and its inverse are smooth functions (in the sense of ordinary calculus), then it follows that $\mathcal{M}$ is globally diffeomorphic to $U$. Evidently, our discussion allows $m\leq n$. If the Jacobian (in the sense of ordinary calculus) of $G$, denoted $\nabla G : U\to\R^{n\times m}$, has the property that at every $x\in U$, $\mathrm{rank}(\nabla G(x)) = m$, then $G$ is called a smooth immersion; this is equivalent to $\nabla G$ having full-rank at each $x\in U$. Moreover, because $G$ is homeomorphic onto its image, $G$ is also a smooth embedding.

The tangent space of a smooth manifold can be constructed as the vector space of velocities that a particle moving along the manifold may exhibit at a point. Formally, let $(a,b)\subset \R$ be an open interval and let $x:(a, b)\to U$ be a parameterized smooth curve in $U$. The composition $y\defeq G\circ x : (a,b)\to\mathcal{M}$ is then a parameterized smooth curve on $\mathcal{M}$. The velocity of $y$ is then computed as,
\begin{align}
    \frac{\mathrm{d}}{\mathrm{d}t} y(t) = \frac{\mathrm{d}}{\mathrm{d}t} (G\circ x)(t) = \nabla G(x(t)) \dot{x}(t).
\end{align}
From the preceding discussion, $\nabla G(x(t)) \in \R^{n\times m}$ is a matrix of full-rank and therefore has $m$ linearly-independent $n$-dimensional columns. Since $\dot{x}(t)\in\R^m$ can be arbitrary, we find that the tangent space of $\mathcal{M}$ at $y(t)$ is the vector space spanned by the columns of $\nabla G(x(t))$; note that these basis vectors depend only on the position in $U$ and not on time. Given $y\in\mathcal{M}$, we denote the tangent space by $\mathrm{T}_{y}\mathcal{M} = \set{\nabla G(x) c : c\in\R^m, x = G^{-1}(y)}$ which is a vector subspace of $\R^n$. In the following, we make use of the common identification $\mathrm{T}_x\R^m\cong \R^m$.

A smooth manifold can be turned into a Riemannian manifold by equipping its tangent spaces with an inner product called the Riemannian metric. Formally, given $y\in\mathcal{M}$, $g(y) \mapsto \langle\cdot,\cdot\rangle_{y}$ where $\langle \cdot,\cdot\rangle_y : \mathrm{T}_y\mathcal{M}\times\mathrm{T}_y\mathcal{M}\to\R$ is an inner product. Given the discussion so far, there is no prescription for a Riemannian metric. However, we may ``prefer'' the Riemannian metric that is induced from the ambient Euclidean space $\R^n$. For $\tilde{y}\in\R^n$, the Euclidean metric is defined by $\langle u, v\rangle_{\tilde{y}} = u^\top v$ where $u,v\in\mathrm{T}_{\tilde{y}}\R^n \cong\R^n$. The induced metric on $\mathcal{M}\subset\R^n$ is then defined by $\langle u,v\rangle_y = u^\top v$ where $u$ and $v$ are viewed as vectors in $\R^n$ satisfying $u,v\in\mathrm{T}_{y}\mathcal{M}$.

Given a metric on $\mathcal{M}$, we can compute an associated metric on $U$ called the {\it  pullback metric}. The pullback metric is defined by,
\begin{align}
    \tilde{g}(x)(\tilde{u},\tilde{v}) &= g(G(x))(\nabla G(x)\tilde{u}, \nabla G(x)\tilde{v}) \\
    &= \tilde{u}^\top \nabla G(x)^\top \nabla G(x) \tilde{v}
\end{align}
where $\tilde{u},\tilde{v}\in \mathrm{T}_x\R^m\cong \R^m$. From the condition that $\nabla G(x)$ is a matrix of full-rank for each $x\in U$, it can be shown that $\nabla G(x)^\top \nabla G(x)$ is a positive definite matrix. Hence the pullback metric is a proper inner product. Unlike the induced metric on $\mathcal{M}$ which has no real dependency on $y\in\mathcal{M}$, the pullback metric does depend on $x\in U$ through $\nabla G(x)^\top \nabla G(x)$. The pullback metric can be regarded as the expression of the induced metric on $\mathcal{M}$ in the global coordinates of $U$. Indeed, if $y=G(x)$, recall that every $u\in \mathrm{T}_y\mathcal{M}$ can be expressed as $u=\nabla G(x)\tilde{u}$ where $\tilde{u}\in\R^m$; the unique $\tilde{u}$ can be computed from
\begin{align}
    \tilde{u} = (\nabla G(x)^\top\nabla G(x))^{-1} \nabla G(x)^\top u.
\end{align}

In our construction, we have described how to induce a metric on $\mathcal{M}$ via the Euclidean metric in the embedding space. We then described how the metric materializes in the global coordinate system via the pullback metric. We are now in a position to state how these objects inform integration on Riemannian manifolds. 
We have the following theorem.
\begin{theorem}
Let $\mathrm{dVol}$ denote the Riemannian volume element on $\mathcal{M}$ when the metric on $\mathcal{M}$ is the induced Euclidean metric from $\R^n$. Given a smooth function $f : \mathcal{M}\to\R$, its integral over $\mathcal{M}$ can be expressed in terms of the global coordinate system as,
\begin{align}
    \int_{\mathcal{M}} f(y) ~\mathrm{dVol}(y) = \int_{U} f(G(x)) \cdot \sqrt{\mathrm{det}(\nabla G(x)^\top \nabla G(x))} ~\mathrm{d}x.
\end{align}
\end{theorem}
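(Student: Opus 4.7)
The plan is to reduce the statement to the standard expression for the Riemannian volume form in local coordinates. Because $G:U\to\mathcal{M}$ is a global smooth embedding with smooth inverse, the pair $(\mathcal{M}, G^{-1})$ is a single global coordinate chart, so one can avoid partitions of unity entirely.

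First I would invoke a standard fact from Riemannian geometry: on any oriented Riemannian $m$-manifold $(\mathcal{M},g)$, the Riemannian volume form in a coordinate chart $\varphi:V\to\mathcal{M}$ is
\begin{align}
    d\mathrm{Vol} = \sqrt{\det\bigl(g_{ij}(\varphi(x))\bigr)}\,dx^1\wedge\cdots\wedge dx^m,
\end{align}
where $g_{ij} = g(\partial_i,\partial_j)$ are the metric components in the chart. This is justified by writing $d\mathrm{Vol}$ as the unique top form that evaluates to $+1$ on any positively oriented orthonormal frame and then using Gram-Schmidt to relate the coordinate basis $\partial_i$ to an orthonormal basis, whence the determinant factor appears.

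Next I would identify the components of the pullback metric in our particular chart. As derived in the paragraphs preceding the theorem, the induced Euclidean metric on $\mathcal{M}\subset\R^n$ is $\langle u,v\rangle_y = u^\top v$ for $u,v\in\mathrm{T}_y\mathcal{M}$, and under $G$ the coordinate basis vector $\partial_i$ at $x\in U$ maps to the $i$th column of $\nabla G(x)$. Consequently the $(i,j)$ entry of the pullback metric is
\begin{align}
    \tilde{g}_{ij}(x) = [\nabla G(x)^\top\nabla G(x)]_{ij}.
\end{align}
Substituting this into the coordinate expression for the volume form and applying the change of variables for integration of top forms under the diffeomorphism $G$ yields
\begin{align}
    \int_{\mathcal{M}} f(y)\,d\mathrm{Vol}(y) = \int_U f(G(x))\sqrt{\det\bigl(\nabla G(x)^\top \nabla G(x)\bigr)}\,dx,
\end{align}
which is the claimed identity.

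The main obstacle is establishing the coordinate expression $\sqrt{\det g_{ij}}\,dx^1\cdots dx^m$ for the Riemannian volume form; if one takes this as a black-box from Riemannian geometry (cf.\ Lee), everything else is a short computation that unwinds the definitions. A minor technical point is that if $\mathcal{M}$ is not oriented one should phrase the argument in terms of the Riemannian density rather than an oriented top form, so that the absolute value implicit in $|dx^1\cdots dx^m|$ is handled correctly; this replacement is routine and does not alter the final formula, since $\nabla G(x)^\top \nabla G(x)$ is positive definite and hence its determinant is already positive.
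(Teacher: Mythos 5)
Your proof sketch is correct, and it mirrors the expository setup that the paper gives before stating the theorem: the paper also identifies the pushforward of the coordinate basis vectors with the columns of $\nabla G(x)$, computes the pullback of the Euclidean metric as $\tilde{g}_{ij}(x) = [\nabla G(x)^\top \nabla G(x)]_{ij}$, and arrives at $\sqrt{\det(\nabla G^\top \nabla G)}$. Where you differ is in what is proved versus cited. The paper does \emph{not} actually prove the integration identity; it develops the pullback-metric background and then simply refers to Frank Jones (and, for more general versions, Lee) for the theorem itself. You, by contrast, explicitly reduce the claim to the standard coordinate expression for the Riemannian volume form $\sqrt{\det g_{ij}}\,dx^1\wedge\cdots\wedge dx^m$, and note correctly that this is the one genuinely nontrivial ingredient, to be justified either as a black box or via a Gram--Schmidt argument relating the coordinate frame to an orthonormal one. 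Your remark about replacing the volume form with the Riemannian density when $\mathcal{M}$ is not assumed orientable is a good technical observation that the paper does not address, and you are right that it changes nothing in the final formula since $\nabla G(x)^\top \nabla G(x)$ is positive definite. In short: same underlying geometry, but you supply the derivation of the step the paper outsources to a reference.
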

A proof of this result may be found in \citep{frank-jones}. More sophisticated variants of this result may be found textbooks on differential geometry; see, {\it inter alia}, \citep{lee2003introduction}.

If $\pi_{\mathcal{M}}$ is a density on $\mathcal{M}$, $y\sim \pi_{\mathcal{M}}$ and $A\subset \mathcal{M}$, then we have,
\begin{align}
    \underset{y\sim \pi_{\mathcal{M}}}{\mathrm{Pr}}\left[y\in A\right] &= \int_{\mathcal{M}} \mathbf{1}\set{y \in A}\cdot \pi_{\mathcal{M}}(y)~\mathrm{dVol}(y) \\
    &= \int_U \pi_{\mathcal{M}}(G(x)) \cdot\mathbf{1}\set{G(x)\in A} \cdot\sqrt{\mathrm{det}(\nabla G(x)^\top \nabla G(x))} ~\mathrm{d}x \\
    &= \underset{x\sim \pi_{U}}{\mathrm{Pr}}\left[x\in G^{-1}(A)\right]
\end{align}
where
\begin{align}
    \pi_U(x) = \pi_{\mathcal{M}}(G(x))\cdot\sqrt{\mathrm{det}(\nabla G(x)^\top \nabla G(x))}
\end{align}
which, upon rearrangement, is the manifold change-of-variables formula.

Given a subset $A\subset \mathcal{M}$, we define its volume by,
\begin{align}
    \int_A ~\mathrm{dVol}(y) = \int_{G^{-1}(A)} \sqrt{\mathrm{det}(\nabla G(x)^\top \nabla G(x))}~\mathrm{d}x.
\end{align}
Such a definition of volume is sometimes called the ``surface measure'' and it is a generalization of arclength in the one-dimensional setting. Moreover, this notion of volume coincides with the $m$-dimensional Hausdorff area of $A$; this is a consequence of the {\it area formula} and a detailed discussion can be found in \citep{federer1969geometric}.

\clearpage
\section{Embedded Manifolds}\label{app:embedded-manifolds}

A smooth manifold $\mathcal{X}$ of dimension $k$ is a second-countable Hausdorff space such that for every $x\in \mathcal{X}$ there is a homeomorphism between a neighborhood of $x$ and $\R^k$. By the Whitney Embedding Theorem (see, inter alia, \citep{lee2003introduction}), every smooth manifold can be smoothly embedded into Euclidean space of dimension $2k$. It is frequently possible to express an embedded manifold as the zero level-set of a constraint function: Let $g : \R^m\to\R^k$ be a smooth function and define $\mathcal{X}\defeq \set{x\in\R^m : g(x) = 0}$. If $\nabla g(x)\in\R^{k\times m}$ is a matrix of full-rank for every $x\in\mathcal{X}$, we say that $\mathcal{X}$ is an embedded manifold of rank $k$. 
To see how these definitions apply, let us consider some examples.

\subsection{Hypersphere}

The sphere in $\R^3$ is the zero level-set of the constraint function $g(x) \defeq x^\top x - 1$.

\subsection{Torus}

The torus is the preimage of the constraint function $g : \R^4\to\R^2$ defined by
\begin{align}
    g(x) \defeq \begin{pmatrix}
    x_1^2 + x_2^2 - 1 \\
    x_3^2 + x_4^2 - 1
    \end{pmatrix}.
\end{align}

\subsection{Stiefel Manifold}

A constraint function for the $\mathrm{Stiefel}(n, p)$ manifold is $g:\R^{n\times p} \to \R^{n\times n}$ defined by $g(\mathbf{M}) = \mathbf{M}^\top \mathbf{M} -\mathrm{Id}_n$. Using the $\mathrm{vec} : \R^{m\times n} \to\R^{mn}$ isomorphism, $\mathrm{Stiefel}(n, p)$ may also be embedded into $\R^{np}$.
\newpage
\section{Stiefel QR Decomposition}\label{app:stiefel-qr-decomposition}

In \cref{app:polar-decomposition-isomorphism} we discussed computing the positive definite component $\mathbf{P}$ as the principal square root of $\mathbf{A}^\top\mathbf{A}$. In our algorithm, $\mathbf{P}$ is represented according to its Cholesky factor, which is unique because $\mathbf{P}$ is (assuming $\mathbf{A}$ is full-rank). An alternative would be to compute a Cholesky factor of $\mathbf{A}^\top\mathbf{A}$ directly; such an approach allows us to use the QR decomposition in place of the polar decomposition.
\begin{proposition}
Suppose $\mathbf{A}\in \R^{m\times n}$ is a matrix of full-rank. The (unique) QR decomposition of $\mathbf{A}$ is
\begin{align}
    \mathbf{A} = \mathbf{Q}\mathbf{R}
\end{align}
where
\begin{align}
    \mathbf{L} &\defeq \mathrm{Cholesky}(\mathbf{A}^\top\mathbf{A}) \\
    \mathbf{R} &\defeq \mathbf{L}^\top \\
    \mathbf{Q} &\defeq \mathbf{A}\mathbf{R}^{-1}.
\end{align}
Moreover $\mathbf{Q}\in\mathrm{Stiefel}(m, n)$.
\end{proposition}
\begin{proof}
The uniqueness of $\mathbf{L}$ (and consequently $\mathbf{R}$) follows from the fact that the Cholesky decomposition of positive definite matrices is unique. Since $\mathbf{A}$ is of full-rank, $\mathbf{A}^\top\mathbf{A}$ is a positive definite matrix. Since $\mathbf{L}$ has positive diagonal entries, its determinant is non-zero and therefore has a unique inverse. This implies the uniqueness of $\mathbf{Q}$. The fact that $\mathbf{Q}$ is an element of the Stiefel manifold follows from direct evaluation:
\begin{align}
    \mathbf{Q}^\top\mathbf{Q} &= (\mathbf{R}^{-1})^\top \mathbf{A}^\top\mathbf{A} \mathbf{R}^{-1} \\
    &= \mathbf{L}^{-1} \mathbf{L}\mathbf{L}^\top(\mathbf{L}^\top)^{-1} \\
    &= \mathrm{Id}.
\end{align}
\end{proof}

\begin{proposition}
Let $(\mathbf{Q},\mathbf{L})\in\mathrm{Stiefel}(m,n) \times\mathrm{Tri}_+(n)$. The Jacobian determinant of the transformation $(\mathbf{Q},\mathbf{L})\mapsto \mathbf{Q}\mathbf{L}^\top$ is
\begin{align}
    \prod_{i=1}^n \mathbf{L}_{ii}^{m-i}.
\end{align}
\end{proposition}
\begin{proof}
See page 31 of \citep{edelman}.
\end{proof}

\clearpage
\section{Overview of Normalizing Flows}

This section describes the Euclidean normalizing flows used in this paper.

\subsection{RealNVP}

Let $x\in\R^m$ and let $n,p\in\mathbb{N}$ satisfy $m=n+p$. Consider partitioning $x$ into components $x_a\in\R^n$ and $x_b\in\R^p$ by taking the first $n$ and last $p$ components from $x$, respectively. Let $\mu(\cdot;\theta) :\R^n\to \R^p$ and $\sigma : \R^n\to\R^p_+$ be a functions parameterized by $\theta\in\R^{d_\mu}$ and $\phi\in\R^{d_{\sigma}}$, respectively. Compute the affine transformation of $x_b$ according to
\begin{align}
    y_b \defeq \sigma(x_a) \odot x_b + \mu(x_a) \\
    y \defeq (x_a, y_b).
\end{align}
The transformation $x \mapsto y$ has a Jacobian of the form,
\begin{align}
    \nabla_x y = \begin{pmatrix} \mathrm{Id}_n & \mathbf{0}_{n\times p} \\ \nabla_{x_a} y_b & \mathrm{diag}(\sigma(x_a)) \end{pmatrix}.
\end{align}
The key observation about this Jacobian is that it has a lower-triangular structure. Therefore, its determinant is the product of its diagonal elements.

The transformation $x\mapsto y$ is also invertible. Let $(y_a, y_b)$ be the corresponding partition of $y$. The inverse map is given by,
\begin{align}
    x_b = (y_b - \mu(y_a)) \oslash \sigma(y_a) \\
    x_a = y_a.
\end{align}

With the inverse and Jacobian determinant available, we may apply the transformation $x\mapsto y$ in the Euclidean change-of-variables formula. This transformation is called RealNVP. In practice, it is common to chain multiple RealNVP transformations together in order to obtain a more expressive flow. One may also permute the elements of $y$ after each application of the RealNVP transformation (which is, of course, an invertible, volume-preserving transformation) in order to construct affine transformations of different variables.

\subsection{Neural ODE}

Let $x\in\R^m$ and let $f(\cdot, \cdot; \theta) : \R^m\times \R\to\R^m$ be a smooth function parameterized by $\theta\in\R^d$. Consider solving the initial value problem defined by,
\begin{align}
    \frac{\mathrm{d}}{\mathrm{d}t} \phi_t(x) &= f(\phi_t(x), t; \theta) \\
    \phi_0(x) &= x.
\end{align}
The map $\phi_{(\cdot)}(\cdot) : \R\times\R_m\to\R_m$ is called the flow of $f$. The existence and uniqueness of differential equations leads to the group property of flows:
\begin{align}
    \phi_{t+s}(x) = \phi_t(\phi_s(x)).
\end{align}
In particular, since $\phi_0(x)=x$, we have $\phi_{-t}\circ \phi_t = \mathrm{Id}$ or $\phi_t^{-1} = \phi_{-t}$ so that the inverse flow map is obtained from the flow map with a negated time index.

How does the flow of a vector field affect probability? This is an important question that can be answered at varying levels of sophistication. One elegant analysis uses techniques from fluid mechanics and the Lie derivative theorem. In the following we adopt a simpler, if more mechanical, derivation of the rate of change of the Jacobian determinant.
\begin{align}
    \frac{\mathrm{d}}{\mathrm{d}t} \mathrm{det}(\nabla_x \phi_t(x)) &= \mathrm{det}(\nabla_x \phi_t(x)) ~\mathrm{trace}\paren{\left[\nabla_x \phi_t(x)\right]^{-1} \frac{\mathrm{d}}{\mathrm{d}t} \nabla_x\phi_t(x)} \\
    &= \mathrm{det}(\nabla_x \phi_t(x)) ~\mathrm{trace}\paren{\left[\nabla_x \phi_t(x)\right]^{-1} \mathbf{D}_x f(\phi_t(x), t; \theta)} \\
    &= \mathrm{det}(\nabla_x \phi_t(x)) ~\mathrm{trace}\paren{\left[\nabla_x \phi_t(x)\right]^{-1} \nabla_x f(\phi_t(x), t; \theta) \nabla_x \phi_t(x)} \\
    &= \mathrm{det}(\nabla_x \phi_t(x)) ~\mathrm{trace}\paren{\nabla_x f(\phi_t(x), t; \theta)} \\
    &= \mathrm{det}(\nabla_x \phi_t(x)) ~\mathrm{div}(f(\phi_t(x), t; \theta))
\end{align}
From which it follows that,
\begin{align}
    \frac{\mathrm{d}}{\mathrm{d}t} \log\abs{\mathrm{det}(\nabla_x \phi_t(x))} = \mathrm{div}(f(\phi_t(x), t; \theta)).
\end{align}
Let $\pi :\R^m\to\R_+$ be a probability density and let $x\sim\pi$. Let $\pi_t(\phi_t(x))$ be the density of $\phi_t(x)$. Because $\log \pi_t(\phi_t(x)) = \log \pi(x) - \log\abs{\mathrm{det}(\nabla_x \phi_t(x))}$, we have,
\begin{align}
    \frac{\mathrm{d}}{\mathrm{d}t} \log \pi_t(\phi_t(x)) = -\mathrm{div}(f(\phi_t(x), t; \theta)).
\end{align}
This formula is sometimes called the {\it instantaneous change-of-variables formula}. From the initial condition $\phi_0(x) = x$, we obtain that $\log\abs{\mathrm{det}(\nabla_x \phi_0(x))} = 0$, which gives an initial condition for the time evolution of the Jacobian determinant.

In general, none of the initial value problems described so far have analytical solutions. This necessitates the use of numerical integrators to compute the map $\phi_t(\cdot)$, its inverse, and the Jacobian determinant correction from the instantaneous change-of-variables formula. The method is called {\it neural ODE} because $f(\cdot, \cdot;\theta)$ is often chosen to be a neural network parameterized by $\theta$.

\clearpage
\section{Universal Approximation for the Dequantized Density}\label{app:universal-approximation-dequantized-density}

The purpose of this section is to give sufficient conditions on the dequantization of a distribution on a manifold into an ambient Euclidean space such that a suitably expressive normalizing flow on the Euclidean space could, in principle, learn the density to an arbitrary precision. The conclusion of this section is that it is sufficient that the distribution on the manifold and auxiliary structure be measurable with respect to the natural Borel $\sigma$-algebra and that the transformation between Euclidean space and the product space of the manifold and auxiliary structure be continuously differentiable. After stating this result, the remainder of the section is devoted to explaining what is meant by universal approximation of Euclidean densities.

\begin{definition}
Let $\mathcal{X}$ be a topological space. The smallest $\sigma$-algebra containing all the open sets of $\mathcal{X}$ is called the Borel $\sigma$-algebra. The Borel $\sigma$-algebra is denoted $\mathfrak{B}(\mathcal{X})$. An element $A\subset\mathfrak{B}(\mathcal{X})$ is called a Borel set.
\end{definition}
\begin{definition}
A measure $\mu$ on $\R^m$ is said to be absolutely continuous with respect to the Lebesgue measure if there exists a measurable function $\pi :\R^m\to\R_+$ such that
\begin{align}
    \mu(A) = \int_A \pi(x) ~\mathrm{d}x,
\end{align}
where $A\in\mathfrak{B}(\R^m)$.
\end{definition}
\begin{definition}
Let $\mathcal{X}$ and $\mathcal{Y}$ be topological spaces. A function $f:\mathcal{X}\to\mathcal{Y}$ is said to be Borel measurable if for all $E\in \mathfrak{B}(\mathcal{Y})$
\begin{align}
    f^{-1}(E) = \set{x \in \mathcal{X} : f(x)\in E} \in \mathfrak{B}(\mathcal{X}).
\end{align}
\end{definition}
\begin{proposition}\label{prop:measurable-compositions}
Let $f : \mathcal{X}\to\mathcal{Y}$ and $g : \mathcal{X}\to\mathcal{Y}$ be two Borel measurable functions. Then the product function $h(x) \defeq g(x) f(x)$ is also Borel measurable. Moreover, if $k:\mathcal{Y}\to\mathcal{Z}$ is another Borel measurable function then the composition $k\circ f:\mathcal{X}\to\mathcal{Z}$ is also Borel measurable.
\end{proposition}

\begin{proposition}\label{prop:measurable-density}
Let $U$ be an open subset of $\R^m$ and let $G:U\to \R^n$ be a homeomorphism on its image. Let $\mathcal{M}\defeq \set{G(x) : x\in U}$. Assume further that $G$ is a continuously differentiable function. Let $\mu_\mathcal{M}$ be a measure on $\mathcal{M}$ that is absolutely continuous with respect to the volume measure on $\mathcal{M}$; that is,
\begin{align}
    \mu_\mathcal{M}(A) = \int_A \pi_\mathcal{M}(y) ~\mathrm{dVol}(y)
\end{align}
where $\pi_\mathcal{M}:\mathcal{M}\to\R_+$ is a measurable function and $A\subset\mathcal{M}$ is a Borel subset. Then, if $y\sim \pi_\mathcal{M}$, the random variable $x\defeq G^{-1}(y)$ has a measurable density with respect the the Lebesgue measure given by
\begin{align}
    \pi_U(x) \defeq \pi_\mathcal{M}(G(x)) \cdot\sqrt{\mathrm{det}(\nabla G(x)^\top \nabla G(x))}.
\end{align}
\end{proposition}
\begin{proof}
By the manifold change-of-variables formula, the measure $\mu_\mathcal{M}$ is related to the Lebesgue measure on $\R^m$ according to,
\begin{align}
    \label{eq:manifold-measurable} \mu_\mathcal{M}(A) = \int_{G^{-1}(A)} \pi_\mathcal{M}(G(x)) \cdot\sqrt{\mathrm{det}(\nabla G(x)^\top \nabla G(x))} ~\mathrm{d}x.
\end{align}
Now we apply the results of \cref{prop:measurable-compositions} to show that the integrand on the right-hand side of \cref{eq:manifold-measurable} is a measurable function. Because $G$ is smooth it is continuous, and continuous functions are measurable. Therefore, when $A\in \mathfrak{B}(\mathcal{M})$ we obtain that $G^{-1}(A)\in\mathfrak{B}(U)$. Moreover, $\pi_\mathcal{M}\circ G$ is a measurable function with respect to the Borel $\sigma$-algebra on $U$ because the composition of measurable functions is measurable. Finally, since $G$ is continuously differentiable, and since the determinant and square-root are continuous functions, $\sqrt{\mathrm{det}(\nabla G(x)^\top \nabla G(x))}$ is also a measurable function.

Thus, if $y\sim \pi_\mathcal{M}$, the random variable $x\defeq G^{-1}(y)$ has a measurable density with respect the the Lebesgue measure given by
\begin{align}
    \pi_U(x) \defeq \pi_\mathcal{M}(G(x)) \cdot\sqrt{\mathrm{det}(\nabla G(x)^\top \nabla G(x))}.
\end{align}
\end{proof}
\begin{corollary}
The measure
\begin{align}
\mu_U(A) \defeq \int_A \pi_U(x)~\mathrm{d}x
\end{align}
is absolutely continuous with respect to the Lebesgue measure.
\end{corollary}

In the context of dequantization, we would like to understand the conditions under which the dequantization of a density on an embedded manifold $\mathcal{Y}$ via an auxiliary embedded manifold $\mathcal{Z}$ and a smooth, invertible function $G: U\to \mathcal{Y}\times\mathcal{Z}$ produces a Lebesgue measurable density on $U\subset\R^m$. 
\begin{proposition}\label{prop:dequantization-density}
Let $\mathcal{Y}$ and $\mathcal{Z}$ be embedded manifolds. Let $\pi_\mathcal{Y}$ be the density on $\mathcal{Y}$ defined with respect to the Riemannian volume element $\mathrm{dVol}_{\mathcal{Y}}$ and let $\pi_{\mathcal{Z}}(\cdot\vert y)$ be a density on $\mathcal{Z}$ (which may depend on $y\in \mathcal{Y}$) defined with respect to the Riemannian volume element $\mathrm{dVol}_{\mathcal{Z}}$. Suppose further that $\mathcal{Y}\times\mathcal{Z}=G(U)$, where $U$ is an open set of $\R^m$ and $G : U\to\mathcal{Y}\times\mathcal{Z}$ is continuously differentiable homeomorphism. If $(y,z)$ are random variables having density function $\pi_\mathcal{Y}(y)\pi_{\mathcal{Z}}(\cdot\vert y)$ with respect to the product element $\mathrm{dVol}_{\mathcal{Y}}\times\mathrm{dVol}_{\mathcal{Z}}$, then the random variable $x=G^{-1}(y, z)$ has a density with respect to Lebesgue measure if $\pi_\mathcal{Y}(y) \cdot \pi_\mathcal{Z}(z\vert y)$ is measurable with respect to $\mathfrak{B}(\mathcal{Y}\times\mathcal{Z}) = \mathfrak{B}(\mathcal{Y})\times\mathfrak{B}(\mathcal{Z})$.
\end{proposition}
\begin{proof}
This follows as a result of \cref{prop:measurable-density} when $\mathcal{M}=\mathcal{Y}\times\mathcal{Z}$ and $\mu_\mathcal{M}=\mu_{\mathcal{Y}\times\mathcal{Z}}$ is defined by
\begin{align}
    \mu_{\mathcal{Y}\times\mathcal{Z}}(A) = \int_{A} \pi_\mathcal{Y}(y) \cdot \pi_\mathcal{Z}(z\vert y) ~\mathrm{dVol}_{\mathcal{Z}}(z)~\mathrm{dVol}_{\mathcal{Y}}(y)
\end{align}
where $A\in\mathfrak{B}(\mathcal{Y}\times\mathcal{Z})$.
The conditions of the proposition are met because $U$ is an open subset of $\R^m$ by assumption and $G$ is a continuously differentiable homeomorphism by assumption. 
\end{proof}

% Let $\pi_\mathcal{Y}$ be the density on $\mathcal{Y}$ to be dequantized and let $\pi_{\mathcal{Z}}(\cdot\vert y)$ be a density on $\mathcal{Z}$, which may depend on $y\in \mathcal{Y}$. To apply \cref{prop:measurable-density}, identify $\mathcal{M}=\mathcal{Y}\times\mathcal{Z}$ and $\pi_\mathcal{M}(y, z) = \pi_\mathcal{Y}(y) \cdot \pi_\mathcal{Z}(z\vert y)$. \Cref{prop:measurable-density} therefore asserts that the probability distribution constructed in the Euclidean space by dequantization has a density with respect to Lebesgue measure when $\pi_\mathcal{Y}(y) \cdot \pi_\mathcal{Z}(z\vert y)$ is measurable with respect to $\mathfrak{B}(\mathcal{Y}\times\mathcal{Z}) = \mathfrak{B}(\mathcal{Y})\times\mathfrak{B}(\mathcal{Z})$ and $G: \R^m \to\mathcal{Y}\times\mathcal{Z}$ is continuously differentiable.

The importance of the open set $U$ is best seen through an example.
\begin{example}\label{ex:spherical-change-of-variables}
Let $U=\R^{3}\setminus\set{0}$ and consider the transformation $G : U\to\mathbb{S}^2\times\R_+$ by defined by
\begin{align}
    G(x) = (x / r, r).
\end{align}
where $r=\Vert x\Vert$. Thus, we see that $G$ is the spherical coordinate representation of the point $x\in U$. The inverse of $G$ is given by $G^{-1}(s, r) = rs$. Notice that by choosing $U$ to exclude the zero vector, we ensure that $G$ is indeed a homeomorphism; if the zero vector had not been included we would find that $G^{-1}(s, 0) = 0$ for all $s\in\mathbb{S}^2$ so that the inverse is not unique.

To see that $G$ is continuously differentiable, we compute
\begin{align}
    \frac{\partial r}{\partial x_j} &= \frac{x_j}{r} \\
    \frac{\partial}{\partial x_j} \paren{\frac{x_i}{r}} &= \frac{\delta_{ij}}{r} - \frac{x_ix_j}{r^3}
\end{align}
both of which are continuous for $x\in U$. Therefore, if $(r, s)$ is a random variable on $\mathbb{S}^2\times\R_+$ with Borel-measurable density, then the random variable $G^{-1}(r, s)$ has a density in $U$ with respect to Lebesgue measure by \cref{prop:dequantization-density}.
\end{example}

The existence of a Lebesgue measurable density in Euclidean space allows us to apply the theory of universal approximations of certain normalizing flows to guarantee that the dequantized distribution can be approximated arbitrarily well by a sufficiently expressive normalizing flow in Euclidean space.

The remaining discussion in this appendix is paraphrased from Section 3.4.3 in \citep{kobyzev2020normalizing}, which itself draws from \citep{pmlr-v97-jaini19a} and \citep{pmlr-v80-huang18d}. The essential idea is that one can produce auto-regressive normalizing flows whose couplings are dense in a space of functions that are guaranteed to have a universality property.

\begin{definition}
Let $T:\R^m\to\R^m$ be a function and write $T(x) = (T_1(x),\ldots, T_m(x))$ where $T_k : \R^m\to\R$. Such a function is called triangular if $T_k(x)$ depends only on $(x_1,\ldots, x_k)$
\end{definition}
\begin{definition}
Let $T:\R^m\to\R^m$ be a function and write $T(x) = (T_1(x),\ldots, T_m(x))$ where $T_k : \R^m\to\R$. Such a function is called increasing if $T_k$ is an increasing function of $x_k$.
\end{definition}
\begin{definition}\label{def:2021-05-05-auto-regressive}
An auto-regressive normalizing flow is a triangular map $(x_1,\ldots,x_m)\mapsto (y_1,\ldots,y_m)$ of the form
\begin{align}
    y_k = h(x_k\vert \Theta_k(x_1,\ldots,x_{k-1}))
\end{align}
where $h(\cdot\vert\theta) : \R\to\R$ is a bijection parameterized by $\theta\in \R^d$ and $\Theta_k :\R^{k-1}\to\R^d$ is a map that parameterizes $h$ according to $(x_1,\ldots,x_{k-1})$.
\end{definition}
In our experimentation on dequantization, we have {\it not} considered using auto-regressive flows in the ambient Euclidean space; neither RealNVP nor neural ODEs are auto-regressive in the sense of \cref{def:2021-05-05-auto-regressive}. Therefore, this discussion is primarily of theoretical interest.

\begin{definition}
Let $\mu$ be a probability measure on a measurable space $\mathcal{X}$ and let $x$ be a random variable taking values in $\mathcal{X}$. We say that the law of $x$ is $\mu$ if for all Borel sets $A\in\mathfrak{B}(\mathcal{X})$ we have
\begin{align}
    \mathrm{Pr}(x\in A) = \int_A ~\mathrm{d}\mu(x).
\end{align}
\end{definition}
The following result is from \citep{bogachev}.
\begin{proposition}\label{prop:2021-05-05-transport-map}
Let $\mu$ and $\mu'$ be probability measures that are absolutely continuous with respect to the Lebesgue measure. Let $x$ be a random variable whose law is $\mu$. Then there exists a triangular-increasing function $T$ such that the law of $T(x)$ is $\mu'$.
\end{proposition}
A construction of such a map in \cref{prop:2021-05-05-transport-map} is given by the Knothe-Rosenblatt rearrangement \citep{villani2008optimal}.

The following result is from \citep{pmlr-v80-huang18d}.
\begin{lemma}\label{lem:pointwise-convergence-map}
Let $\mu$ be a probability measure that is absolutely continuous with respect to Lebesgue measure. Let $T_n$ be a sequence of measurable maps converging pointwise to a map $T$. If the law of a random variable $x$ is $\mu$, then the random variables $T_n(x)$ converge in law to to the random variable $T(x)$.
\end{lemma}
% The proof follows from the application of the bounded convergence corollary of the dominated convergence theorem for probability spaces (which necessarily have finite measure).

The universality of auto-regressive normalizing flows can therefore be established by demonstrating that the functions $h$ appearing in \cref{def:2021-05-05-auto-regressive} are dense in the set of increasing monotone functions. One then applies \cref{prop:2021-05-05-transport-map,lem:pointwise-convergence-map} to demonstrate that any random variable $x$ with law $\mu$ can be transformed by an auto-regressive flow using the function $h$ into an approximation of a random variable $x'$ with law $\mu'$; the approximation has arbitrarily high fidelity as measured by convergence in law. See \citep{pmlr-v80-huang18d,pmlr-v97-jaini19a} for a discussion of parameterized families of functions $h$ that are known to possess this property.

In conclusion, \cref{prop:dequantization-density} gives the necessary conditions on product manifold densities and the change-of-variables $G$ to ensure the existence of a Lebesgue measurable density in an open subset of Euclidean space. We saw in \cref{ex:spherical-change-of-variables} how the choice of open set allows us to avoid pathological points in Euclidean space. Using the existence of the Lebesgue-measurable density, we may apply the theory of universal approximation of Lebesgue-measurable densities using auto-regressive normalizing flows to guarantee that the dequantized distribution can be approximated arbitrarily well. We note, however, that our experiments have not used auto-regressive normalizing flows.

\end{document}